\documentclass[10pt]{article} 
\usepackage[accepted]{tmlr}


\usepackage{amsmath,amsfonts,bm}









\def\eqref#1{equation~\ref{#1}}









\def\1{\bm{1}}










\DeclareMathAlphabet{\mathsfit}{\encodingdefault}{\sfdefault}{m}{sl}
\SetMathAlphabet{\mathsfit}{bold}{\encodingdefault}{\sfdefault}{bx}{n}













\usepackage{hyperref}
\usepackage{url}
\usepackage{graphicx} 
\usepackage{subcaption}
\usepackage{booktabs} 
\usepackage{multirow}
\usepackage{algorithm}
\usepackage{algorithmicx}
\usepackage{algpseudocode}
\usepackage{amsmath, amssymb, amsthm}
\usepackage{xcolor}

\newtheorem{lemma}{Lemma}
\newtheorem{proposition}{Proposition} 
\newtheorem{corollary}{Corollary}
\newtheorem{definition}{Definition} 
\newtheorem{assumption}{Assumption}

\title{Quantile $Q$-Learning: Revisiting Offline Extreme $Q$-Learning with Quantile Regression}


\author{\name Xinming Gao\thanks{These authors contributed equally.} \email seu24yoki@foxmail.com \\
      \addr School of Mathematics, Southeast University
      \AND
      \name Shangzhe Li\footnotemark[1] \email shangzhe@unc.edu \\
      \addr Department of Computer Science, University of North Carolina at Chapel Hill
      \AND
      \name Yujin Cai \email 101300480@seu.edu.com\\
      \addr School of Mathematics, Southeast University
      \AND
      \name Wenwu Yu \thanks{Corresponding Author: Wenwu Yu wenwuyu@seu.edu.cn} \email wenwuyu@seu.edu.cn\\
      \addr School of Mathematics, Southeast University}



\begin{document}

\maketitle

\begin{abstract}
Offline reinforcement learning (RL) enables policy learning from fixed datasets without further environment interaction, making it particularly valuable in high-risk or costly domains. Extreme $Q$-Learning (XQL) is a recent offline RL method that models Bellman errors using the Extreme Value Theorem, yielding strong empirical performance. However, XQL and its stabilized variant MXQL suffer from notable limitations: both require extensive hyperparameter tuning specific to each dataset and domain, and also exhibit instability during training. To address these issues,  we proposed a principled method to estimate the temperature coefficient $\beta$ via quantile regression under mild assumptions. To further improve training stability, we introduce a value regularization technique with mild generalization, inspired by recent advances in constrained value learning. Experimental results demonstrate that the proposed algorithm achieves competitive or superior performance across a range of benchmark tasks, including D4RL and NeoRL2, while maintaining stable training dynamics and using a consistent set of hyperparameters across all datasets and domains.
\end{abstract}

\section{Introduction}
\label{sec:intro}
Deep reinforcement learning (DRL) has achieved impressive results across a broad range of domains, including navigation \citep{mirowski2018learning}, healthcare \citep{yu2021reinforcement}, robotics \citep{haarnoja2018soft}, and games \citep{mnih2015human,silver2016mastering}. Recent advances in offline reinforcement learning (offline RL) \citep{kumar2020conservative,levine2020offline,kostrikov2021offline,garg2023extreme} have extended the capability of DRL by enabling agents to learn solely from static datasets, without requiring further interaction with the environment. This paradigm shift is particularly promising in domains where data collection is expensive, risky, or impractical.

Among the recent developments, the offline version of extreme Q-learning (XQL) \citep{garg2023extreme} introduces a novel perspective by modeling the Bellman error distribution using the Extreme Value Theorem (EVT), assuming it follows a Gumbel distribution. This theoretical insight leads to an in-sample learning algorithm that has demonstrated competitive performance on standard offline RL benchmarks such as D4RL. However, XQL exhibits notable instability during training, as reported in a follow-up work called MXQL \citep{omura2024mxql}. To address this issue, MXQL \citep{omura2024mxql} introduces a variant that stabilizes training through a Maclaurin-series-based approximation of the original XQL loss.

\begin{figure}[t]
  \centering
  \begin{minipage}{0.45\textwidth}
    \centering
    \includegraphics[width=\linewidth]{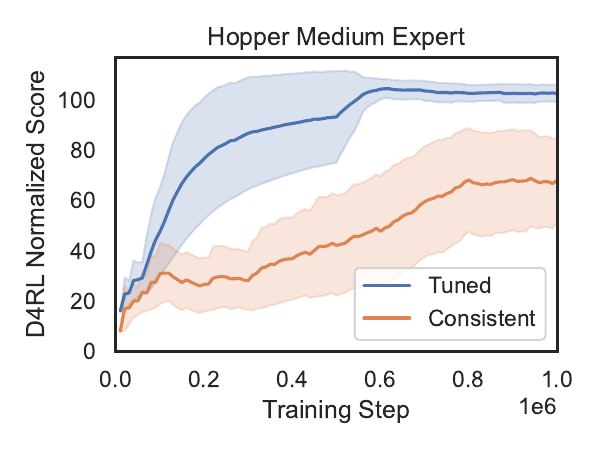}
  \end{minipage}
  \hfill
  \begin{minipage}{0.45\textwidth}
    \centering
    \includegraphics[width=\linewidth]{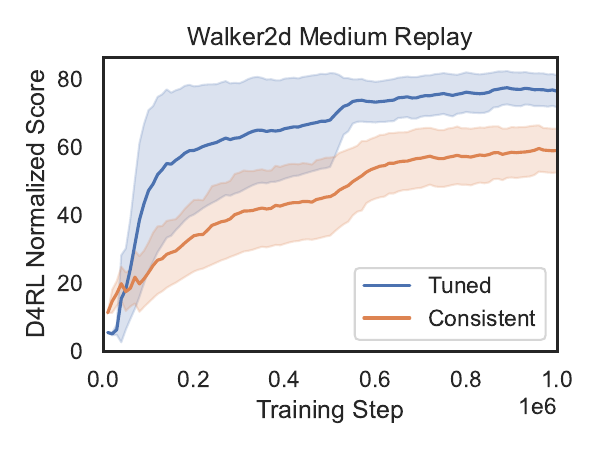}
  \end{minipage}
  \caption{Comparison of XQL performance using dataset-specific tuning versus consistent hyperparameters across the domain. In some scenarios, performance degrades notably without dataset-specific tuning.}
  \label{fig:fig1}
\end{figure}

Despite these efforts, several challenges remain. Both XQL and MXQL are sensitive to hyperparameter choices—particularly the temperature coefficient—which often require dataset-specific tuning to achieve optimal performance, making them impractical for real-world applications. This sensitivity can lead to significant performance degradation when hyperparameters are changed, as demonstrated in Figure~\ref{fig:fig1}. Besides, even the stabilized MXQL variant can experience training instability under certain conditions (see Figure~\ref{fig:combined_QQL_vs_MXQL}). These limitations motivate our central research question:
\begin{quote}
\textit{Can we design an algorithm based on offline XQL that maintains stability and strong performance across diverse domains and offline datasets, using a single, consistent set of hyperparameters?}
\end{quote}

To address this issue, we propose a principled method for estimating the temperature coefficient $\beta$ via quantile regression under mild assumptions, thereby eliminating the need for dataset-specific tuning. Building on this, we introduce a value regularization approach with mild generalization, inspired by prior work on doubly constrained value learning \citep{mao2024doubly}. The performance of the proposed algorithm matches or exceeds that of the state-of-the-art model-free offline RL algorithms across a variety of domains and dataset types in diverse offline RL benchmarks, including D4RL \citep{fu2020d4rl} and NeoRL2 \citep{gao2025neorl}, without the need for domain-specific hyperparameter tuning.

\section{Related Works}
\subsection{Offline Reinforcement Learning}
Our work builds primarily on the literature of offline reinforcement learning (offline RL), a subfield of reinforcement learning that aims to learn optimal policies from a fixed dataset without any further interaction with the environment. A central challenge in offline RL is that many online off-policy methods tend to underperform due to extrapolation error \citep{fujimoto2019off} or distributional shift \citep{levine2020offline}. To address these issues, offline RL algorithms often augment standard off-policy methods with a penalty term that measures the divergence between the learned policy and the behavior policy used to collect the data \citep{fujimoto2021minimalist}. Existing offline RL approaches vary in how they formulate the problem, including methods that constrain the learned policy to remain close to the behavior policy \citep{zhang2023constrained,fujimoto2021minimalist,li2022data}, approaches that incorporate pessimistic value estimation to avoid overestimation of out-of-distribution actions \citep{an2021uncertainty,yu2021combo,kostrikov2021offline,kumar2020conservative}, and techniques that leverage deep architectures such as large autoregressive models \citep{chen2021decision,janner2021sequence} or generative models \citep{janner2022diffuser,ajay2022conditional,wang2022diffusion,zhang2025energy}.

In our work, we specifically focus on improving an existing offline RL method, Extreme $Q$-Learning (XQL) \citep{garg2023extreme}, which is potentially unstable and sensitive to hyperparameters as discussed in the introduction. Implicit value regularization (IVR) was later proposed as a new paradigm that reinterprets XQL through an in sample value regularization perspective and derives Exponential $Q$-Learning (EQL) as an equivalent formulation \citep{xu2023implicitvr}. However, the authors also note that EQL introduces exponential terms in the value updates, which can lead to highly unstable gradients and pronounced sensitivity to hyperparameter choices in practice \citep{xu2023implicitvr}. Prior work \citep{omura2024mxql} proposed a method to stabilize XQL by applying a Maclaurin expansion to the original objective function. However, this MXQL variant still relies on a manually tuned temperature parameter $\beta$ and therefore does not address the fundamental challenge of selecting $\beta$ in a robust and data driven manner.

\subsection{Quantile Regression in Reinforcement Learning}
Quantile regression serves as a foundational framework in distributional reinforcement learning. Its practical application was popularized by QR-DQN \citep{dabney2018distributional}, which approximates the return distribution using a discrete set of quantiles learned through quantile-regression losses. Building upon this foundation, subsequent methods have sought more flexible and stable representations of the return quantile function. These include Implicit Quantile Networks (IQN) and non-crossing variants that explicitly enforce monotonicity across quantile levels \citep{dabney2018iqn,zhou2020non}. More recently, the implicit expectile-quantile network (IEQN) \citep{jullien2025dual} extends IQN with a dual expectile-quantile regression objective that jointly learns quantiles and expectiles of the return distribution, provably converging to the true value distribution in the limit of infinite estimated quantile and expectile fractions. In this spirit, our method also employs quantile regression to parameterize an implicit value function.

\section{Preliminaries}
\label{sec:Preliminaries}
The problem is formulated as an Markov decision process (MDP), which is represented by the tuple \((\mathcal{S}, \mathcal{A}, \mathcal{P}, r, \gamma)\), where \(\mathcal{S}\) denotes the state space, \(\mathcal{A}\) represents the action space, \(\mathcal{P}:\mathcal{S}\times\mathcal{A}\rightarrow\Delta_{\mathcal{S}}\) characterizes the transition dynamics; \(r: \mathcal{S} \times \mathcal{A} \rightarrow \mathbb{R}\) is the reward function; \(\gamma \in [0,1)\) is the discount factor. We aim to learn a  policy \(\pi:\mathcal{S}\rightarrow\Delta_{\mathcal{A}}\) to maximize cumulative discounted rewards \citep{sutton2018reinforcement}. In offline RL setting, the policy learning is conducted on a dataset $\mathcal{D}$ consisting a set of trajectories $(\tau_1,...,\tau_T)$. Each trajectory is composed of a group of transitions $\{(s_t,a_t,r_t,s'_t)\}$.

\paragraph{Advantage Weighted Regression}

One class of off-policy methods updates the policy using value functions for weighted regression, commonly referred to as Advantage Weighted Regression (AWR) \citep{peng2019advantage}. In AWR, the policy is updated according to:
\begin{equation*}
\pi(a \mid s) \propto \exp\!\left(\frac{Q(s,a)-V(s)}{\beta}\right)\,\mu(a \mid s),
\end{equation*}
where $\mu$ denotes the behavior policy and the temperature parameter $\beta$ controls the degree of conservatism. Empirically, prior works \citep{xu2023implicitvr,park2024value} show that AWR is highly sensitive to the choice of $\beta$: smaller values can lead to overfitting or instability, while larger values produce overly conservative policies.

\paragraph{Extreme $Q$-Learning} Extreme $Q$-Learning (XQL) \citep{garg2023extreme} is an algorithm capable of solving both online and offline RL tasks. It directly models the maximal value using Extreme Value Theory (EVT) with the assumption that the error in $Q$-functions follows a Gumbel distribution. In our work, we especially focus on the offline part of XQL. Specifically, offline XQL leverages an objective for optimizing a soft value function $V$ over dataset $\mathcal{D}$:
\begin{equation}
\label{eq:xql_value_learn}
\mathcal{J}(V) = \mathbb{E}_{(s, a) \sim \mathcal{D}} \exp\left(({Q}(s, a) - V(s))/\beta\right) - ({Q}(s, a) - V(s)/\beta) - 1 , 
\end{equation}
where $Q$ is the $Q$-function optimized via minimizing mean-squared Bellman error and $\beta$ is the temperature hyperparameter. Furthermore, XQL learns a policy with an advantage-weighted regression \citep{peng2019advantage,nair2020awac} (AWR) style update:
\begin{equation}
\label{eq:exp_adv policy optimization}
\pi^* = \arg \max_{\pi} \mathbb{E}_{(s,a)\sim\mathcal{D}} \left[ e^{(Q(s,a) - V(s))/\beta} \log \pi \right]. 
\end{equation}
In practice, we find that XQL is sensitive to the hyperparameter $\beta$, and its optimal value varies significantly across environments and offline datasets, as shown in Appendix~\ref{subsec: XQL para} and Figure~\ref{fig:fig1}. Revisiting Eqs.~\ref{eq:xql_value_learn} and \ref{eq:exp_adv policy optimization}, we observe that the temperature parameter $\beta$ in the exponential terms plays a critical role in the stability of both value and policy updates, and is likely a key source of the hyperparameter sensitivity observed in XQL.

\section{Revisiting Extreme $Q$-Learning with Quantile Regression}
\label{sec: method}

\subsection{Estimating $\beta$ under Mild Assumptions}
Since the formulation of XQL is sensitive to the hyperparameter $\beta$, we show in this section that $\beta$ can be generalized to a state-dependent function $\beta(s)$ and learned under mild assumptions. We begin by stating the assumptions used in our subsequent analysis involving the estimated $Q$-function $Q$, the optimal $Q$-function $Q^\star$ and the soft value function $V$. It is worth noting that these assumptions are either identical or closely aligned with those used in prior works \citep{hui2023double, garg2023extreme}.
\begin{assumption}[\citep{hui2023double}] Given a state-dependent function $\beta(s)$ and a heteroscedastic Gumbel noise $\epsilon(s,a) \sim \mathcal{G}(0,\beta(s))$:
\begin{equation*}
    \epsilon(s, a) = Q^{\star}(s, a) - Q(s, a).
\end{equation*}
\label{ass: gumbel error 1}
\end{assumption}
\begin{assumption} Given a state-dependent Gumbel noise model $\mathcal{G}(0,\beta(s))$:
    \label{ass: gumbel error 2}
    \[(Q(s,a) - V(s)) \sim   -\mathcal{G}(0,\beta(s)).\] 
\end{assumption}

\paragraph{Remark on Assumption~\ref{ass: gumbel error 1}}  
This assumption is identical to the Gumbel error model used in Double Gumbel Q-learning~\citep{hui2023double}, where the learned $Q$-function is modeled as $Q^\star$ perturbed by Gumbel-distributed errors arising from the max-operator in the Bellman equation.  
It provides an analytically tractable characterization of the heteroscedastic, state-dependent approximation error in deep Q-learning, which is more realistic than homoscedastic Gaussian noise.

\paragraph{Remark on Assumption~\ref{ass: gumbel error 2}} Assumption~\ref{ass: gumbel error 2} is similar to the one used in XQL \citep{garg2023extreme}, except that it allows for a state-dependent noise model. It reduces exactly to the XQL assumption when the noise scale is constant, i.e., $\beta(s) = \beta$.

With the above assumptions in place, we now present Proposition~\ref{prop 1}, followed by Proposition~\ref{prop 2}, which provides key theoretical support for estimating $\beta(s)$.

\begin{proposition} Under Assumptions ~\ref{ass: gumbel error 1} and ~\ref{ass: gumbel error 2}, given an optimal $Q$-function $Q^\star$ and $V^{\star}(s) = \max_aQ^{\star}(s,a)$, the following equation holds:

\begin{equation*}
\label{eq: prop 1}
 V^{\star}(s) = \beta(s) \log \int \exp \left( \frac{Q^{\star}(s, a)}{\beta(s)} \right)da.
\end{equation*}
\label{prop 1}
\end{proposition}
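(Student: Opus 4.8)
The plan is to collapse the asserted identity to a single exponential-moment statement about the Gumbel distribution and then discharge it with Assumptions~\ref{ass: gumbel error 1} and~\ref{ass: gumbel error 2}. Dividing by $\beta(s)$ and exponentiating, $V^{\star}(s)=\beta(s)\log\int\exp(Q^{\star}(s,a)/\beta(s))\,da$ is equivalent to $\int\exp\!\big((Q^{\star}(s,a)-V^{\star}(s))/\beta(s)\big)\,da=1$, i.e.\ to the claim that the optimal residual $Q^{\star}(s,a)-V^{\star}(s)$ has \emph{unit} exponential moment at temperature $\beta(s)$ once $a$ is integrated out. So the first ingredient is the elementary fact that for $X\sim\mathcal{G}(0,\beta)$ one has $\exp(-X/\beta)\sim\mathrm{Exp}(1)$, hence $\mathbb{E}[\exp(-X/\beta)]=1$; note it is exactly the minus sign that keeps this moment finite (the ``wrong-sign'' moment $\mathbb{E}[\exp(+X/\beta)]$ diverges), which is why Assumption~\ref{ass: gumbel error 2} is stated with $-\mathcal{G}(0,\beta(s))$.

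The second ingredient is to establish that the optimal pair $(Q^{\star},V^{\star})$ inherits the Gumbel-residual structure that Assumption~\ref{ass: gumbel error 2} posits for the learned pair $(Q,V)$. Here I would use Assumption~\ref{ass: gumbel error 1}: writing $Q^{\star}(s,a)=Q(s,a)+\epsilon(s,a)$ with $\epsilon(s,a)\sim\mathcal{G}(0,\beta(s))$ i.i.d.\ across actions, the ``Gumbel-max'' identity says that $\max_{a}\big(Q(s,a)+\epsilon(s,a)\big)$ is Gumbel with scale $\beta(s)$ and location $\beta(s)\log\int\exp(Q(s,a)/\beta(s))\,da$, and that the centred quantity $Q^{\star}(s,a)-\max_{a'}Q^{\star}(s,a')$, viewed as a distribution over $a$, matches the $-\mathcal{G}(0,\beta(s))$ shape posited in Assumption~\ref{ass: gumbel error 2}. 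Since $V^{\star}(s)=\max_{a}Q^{\star}(s,a)$ by definition, this is Assumption~\ref{ass: gumbel error 2} evaluated at the optimum, namely $\big(Q^{\star}(s,a)-V^{\star}(s)\big)\sim-\mathcal{G}(0,\beta(s))$. Feeding this into the first ingredient gives $\int\exp\!\big((Q^{\star}(s,a)-V^{\star}(s))/\beta(s)\big)\,da=1$, and rearranging returns the claimed formula; this mirrors the original XQL soft-value derivation \citep{garg2023extreme}, specialised to a state-dependent temperature.

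I expect the second ingredient to be the main obstacle, for two reasons that I would pin down with explicit conventions before running the short computation. First, identifying the deterministic number $V^{\star}(s)=\max_{a}Q^{\star}(s,a)$ with the \emph{location} (mode) of the Gumbel-max law — rather than its mean, which would add an Euler--Mascheroni term $\beta(s)\gamma$ — silently fixes the parametrisation of $\mathcal{G}(0,\beta)$ (mode-$0$ versus mean-$0$), and Assumptions~\ref{ass: gumbel error 1}--\ref{ass: gumbel error 2} must be read in whichever convention makes $\mathbb{E}[\exp(-X/\beta)]=1$ hold on the nose. Second, there is a measure-theoretic subtlety in what $\int(\cdot)\,da$ means: the exponential-moment step naturally produces an expectation against the action distribution under which the Gumbel-noise assumption is posed (the dataset/behaviour policy), so writing the conclusion with bare Lebesgue $da$ requires either folding that density into the reference measure or restricting to in-support actions — the same latitude taken in XQL. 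Once these two conventions are stated, the remaining steps are the two-line computation above.
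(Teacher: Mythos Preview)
Your first ingredient is correct: the identity is equivalent to $\int\exp\!\big((Q^\star(s,a)-V^\star(s))/\beta(s)\big)\,da=1$, and $\mathbb{E}[\exp(-X/\beta)]=1$ for $X\sim\mathcal{G}(0,\beta)$ holds. The gap is the second ingredient. The Gumbel-max identity tells you that $V^\star(s)=\max_a\big(Q(s,a)+\epsilon(s,a)\big)$ is Gumbel-distributed \emph{over the noise $\epsilon$}, with location $\beta(s)\log\int\exp(Q(s,a)/\beta(s))\,da$; it does \emph{not} say that the map $a\mapsto Q^\star(s,a)-V^\star(s)$ follows a $-\mathcal{G}(0,\beta(s))$ law under $da$. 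Your moment fact averages over a Gumbel draw, whereas the target integrates over actions---these are different sources of randomness, and nothing in your outline bridges them. Assumption~\ref{ass: gumbel error 2} posits the negative-Gumbel residual only for the \emph{learned} pair $(Q,V)$; your sentence ``this is Assumption~\ref{ass: gumbel error 2} evaluated at the optimum'' is an assertion, not a derivation, and Assumptions~\ref{ass: gumbel error 1}--\ref{ass: gumbel error 2} do not by themselves transfer it to $(Q^\star,V^\star)$. The convention discussion in your last paragraph (mode-versus-mean, reference measure) is orthogonal to this and does not close the gap.

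The paper sidesteps the issue by working forward from the Gumbel-max identity rather than backward from the moment reduction. It quotes as a lemma (from \citet{hui2023double}) that $V^\star(s)=\beta(s)\log\int\exp(Q(s,a)/\beta(s))\,da+g(s)$ with $g(s)\sim\mathcal{G}(0,\beta(s))$, then uses the trivial constant-in/constant-out identity $\beta\log\int\exp(\alpha/\beta)\,d\alpha+g=\beta\log\int\exp((\alpha+g)/\beta)\,d\alpha$ to absorb $g$ back inside the integral, and finally applies Assumption~\ref{ass: gumbel error 1} to identify $Q+g$ with $Q^\star$. Three lines; Assumption~\ref{ass: gumbel error 2} is never actually invoked. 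If you want to repair your route, the shortest fix is to stop right after the Gumbel-max step: once you have $V^\star(s)=L(s)+g(s)$ with $L(s)$ the log-sum-exp of $Q$, you can compute directly $\beta(s)\log\int\exp(Q^\star/\beta(s))\,da=\beta(s)\log\int\exp((Q+g)/\beta(s))\,da=L(s)+g(s)=V^\star(s)$---which \emph{is} the paper's argument, and your exponential-moment identity then falls out as a corollary rather than serving as the engine of the proof.
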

\begin{proof}
    See Appendix~\ref{sec:proof-prop-1}.
\end{proof}
Proposition~\ref{prop 1} establishes a relationship between the optimal $Q$-function and value function. To extend this relationship to the estimated functions used in practice, we define an estimated value function inspired by the structure in Proposition~\ref{prop 1}.
\begin{definition} Given a $Q$-function estimation and a state-dependent temperature parameter $\beta(s)$, we define the following value function:
    \begin{equation*}
\label{eq: V_soft}
   \tilde V(s)  = \beta(s) \log \int \exp \left( \frac{Q(s, a)}{\beta(s)} \right)da.
\end{equation*}
\label{def:soft-value}
\end{definition}

\paragraph{Remark on Definition~\ref{def:soft-value}} \(\tilde{V}(s)\) can be interpreted as a SoftArgMax~\citep{hui2023double} of \(Q(s, a)\), aligning with the principles of MaxEnt RL and Extreme Q-Learning. Since we assume the $\beta(s)$ scale is consistent across Assumptions~\ref{ass: gumbel error 1} and \ref{ass: gumbel error 2}, it enables a coherent formulation of both \(V(s)\) and \(\tilde{V}(s)\) (Definition~\ref{def:soft-value}). For simplicity, we use the notation \(V(s)\) throughout the paper. Empirical evidence supporting the consistency of the $\beta(s)$ scale is presented in the experimental section.

Based on Proposition~\ref{prop 1} and Definition~\ref{def:soft-value}, we can construct an estimation of $\beta(s)$ based on the value functions:
\begin{proposition}[Estimating $\beta(s)$] Under Assumptions ~\ref{ass: gumbel error 1} and ~\ref{ass: gumbel error 2}, given $\hat{V}(s)=\mathbb{E}[V^\star(s)]$, there exists:
\label{prop 2}
    \begin{equation*}
    \begin{aligned}
         \omega\beta(s)=\hat{V}(s)-V(s),
    \end{aligned}
\end{equation*}
where \(\omega \approx 0.57721 \) is identified as the Euler–Mascheroni constant.
\end{proposition}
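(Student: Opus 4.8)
The plan is to show that the optimal soft value $V^\star(s)$ is itself Gumbel-distributed with location $V(s)$ and scale $\beta(s)$, and then simply read off its mean. First I would use Assumption~\ref{ass: gumbel error 1} to write $Q^\star(s,a) = Q(s,a) + \epsilon(s,a)$ with $\epsilon(s,a)\sim\mathcal{G}(0,\beta(s))$ drawn independently across actions $a$. Since $V^\star(s)=\max_a Q^\star(s,a)$, this gives $V^\star(s) = \max_a\bigl(Q(s,a)+\epsilon(s,a)\bigr)$: the optimal value is the maximum of a family of constant-shifted i.i.d.\ Gumbel variables of common scale $\beta(s)$.

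Next I would invoke the max-stability of the Gumbel family. For a finite action set one has $\max_a\bigl(Q(s,a)+\epsilon(s,a)\bigr)\sim\mathcal{G}\bigl(\beta(s)\log\sum_a\exp(Q(s,a)/\beta(s)),\,\beta(s)\bigr)$; passing to the continuum over actions — the limit being well-defined because the partition function $\int\exp(Q(s,a)/\beta(s))\,da$ is finite — the location becomes $\beta(s)\log\int\exp(Q(s,a)/\beta(s))\,da$, which by Definition~\ref{def:soft-value} is exactly $\tilde V(s)=V(s)$. This matches Proposition~\ref{prop 1}, which identifies the analogous log-partition of $Q^\star$ with $V^\star(s)$; using one and the same scale $\beta(s)$ in Assumptions~\ref{ass: gumbel error 1} and \ref{ass: gumbel error 2} is precisely what makes these two formulations coherent. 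Hence $V^\star(s)\sim\mathcal{G}(V(s),\beta(s))$.

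Finally, the mean of $\mathcal{G}(\mu,\beta)$ equals $\mu+\beta\omega$, where $\omega$ is the mean of the standard Gumbel $\mathcal{G}(0,1)$, i.e.\ the Euler--Mascheroni constant $\omega\approx0.57721$. Therefore $\hat V(s)=\mathbb{E}[V^\star(s)] = V(s)+\omega\beta(s)$, and rearranging gives $\omega\beta(s)=\hat V(s)-V(s)$, as claimed.

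I expect the main obstacle to be making the continuum Gumbel-max step rigorous: ``i.i.d.\ Gumbel noise indexed by a continuum of actions'' must be interpreted as a limit of finite action discretizations, and one has to verify that the log-partition function is finite and that $\mathbb{E}[V^\star(s)]$ is well-defined and commutes with that limit, so that the mean really is location plus $\omega\beta(s)$. A secondary point worth stating explicitly is that the scale $\beta(s)$ in Assumption~\ref{ass: gumbel error 1} is identified with the one used in Definition~\ref{def:soft-value} — an assumption the paper makes and supports empirically.
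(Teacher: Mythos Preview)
Your proposal is correct and follows essentially the same route as the paper: both arguments rest on the Gumbel--max identity to write $V^\star(s)=V(s)+g(s)$ with $g(s)\sim\mathcal{G}(0,\beta(s))$, then take expectations and use $\mathbb{E}[g(s)]=\omega\beta(s)$. The only packaging difference is that the paper leans on the already-proved Proposition~\ref{prop 1} and Definition~\ref{def:soft-value} and subtracts the two log-partition expressions algebraically via Lemma~\ref{lemma 1}, so it never needs to revisit max-stability or the continuum-limit technicality you flag---that concern is absorbed into the cited Lemma~\ref{lem:max-q}.
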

\begin{proof}
    See Appendix~\ref{sec:proof-prop-2}.
\end{proof}
In this case, $\beta(s)$ can be estimated by computing \([\hat{V}(s) - V(s)] / \omega\). However, how to estimate both \(\hat{V}(s)\) and \(V(s)\) remains an open question. In the next section, we propose an estimation method based on quantile regression.

\subsection{Learning Value Functions with Quantile Regression}
\label{subsec: vanilla algorithm}

In this section, we demonstrate how the learning of \( V(s) \) and \( \hat{V}(s) \) can be formulated as a quantile regression problem. Under Assumption~\ref{ass: gumbel error 2}, the cumulative distribution function (CDF) of the Gumbel distribution enables us to interpret the value functions as quantiles, as established in Propositions~\ref{prop 3} and~\ref{prop 4}.

\begin{proposition} For a \( Q \)-function and value function satisfying Assumption~\ref{ass: gumbel error 2}, and with \( \alpha_1 = 1 - \exp(-1) \), there exists:
\label{prop 3}
\begin{equation*}
    \begin{aligned}
     P(Q(s,a)\leq V(s))   &=\alpha_1,
    \end{aligned}
\end{equation*}
which shows that \( V(s) \) corresponds to the \( \alpha_1 \)-quantile of \( Q(s, a) \) under its CDF.
\end{proposition}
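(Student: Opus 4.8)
The plan is to compute the relevant probability directly from the closed-form CDF of the Gumbel law, so there is essentially no obstacle beyond bookkeeping the sign convention in Assumption~\ref{ass: gumbel error 2}. Recall that a random variable $X \sim \mathcal{G}(0,\beta(s))$ with scale $\beta(s) > 0$ has cumulative distribution function $F_X(x) = \exp\!\left(-\exp(-x/\beta(s))\right)$, which is continuous and strictly increasing on $\mathbb{R}$.

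First I would translate the statement ``$(Q(s,a) - V(s)) \sim -\mathcal{G}(0,\beta(s))$'' into a statement about a CDF: writing $Y := Q(s,a) - V(s)$ and $X \sim \mathcal{G}(0,\beta(s))$, the assumption says $Y \stackrel{d}{=} -X$, hence for any $t \in \mathbb{R}$,
\begin{equation*}
P(Y \le t) = P(-X \le t) = P(X \ge -t) = 1 - F_X(-t) = 1 - \exp\!\left(-\exp(t/\beta(s))\right),
\end{equation*}
where the third equality uses continuity of $F_X$ (so the atom at $-t$ is zero and $\le$ versus $<$ is immaterial).

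Next I would evaluate this at the threshold $t = 0$. Since $P(Q(s,a) \le V(s)) = P(Q(s,a) - V(s) \le 0) = P(Y \le 0)$, plugging $t=0$ into the displayed expression gives
\begin{equation*}
P(Q(s,a) \le V(s)) = 1 - \exp\!\left(-\exp(0)\right) = 1 - \exp(-1) = \alpha_1,
\end{equation*}
which is exactly the claim; the last identification of $\alpha_1$ is just the definition given in the statement. Finally, I would note that the map $t \mapsto 1 - \exp(-\exp(t/\beta(s)))$ is a bona fide CDF (continuous, strictly increasing, with the correct limits), so $V(s)$ is unambiguously the $\alpha_1$-quantile of the distribution of $Q(s,a)$, justifying the concluding sentence of the proposition. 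The only point requiring any care is the orientation of the negative Gumbel and the positivity $\beta(s) > 0$, both of which are built into Assumption~\ref{ass: gumbel error 2}; everything else is a one-line substitution.
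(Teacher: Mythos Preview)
Your proof is correct and follows essentially the same approach as the paper: both arguments write $Q(s,a)-V(s)=-g(s)$ with $g(s)\sim\mathcal{G}(0,\beta(s))$ and evaluate the Gumbel CDF at $0$ to obtain $1-\exp(-1)$. Your version is simply more explicit about the CDF formula and the continuity/sign bookkeeping, whereas the paper compresses the same chain of equalities into four lines.
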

\begin{proof}
    See Appendix~\ref{sec:proof-prop-3}.
\end{proof}
Similarly, by applying the results from Proposition~\ref{prop 2}, we obtain the following for \( \hat{V} \).
\begin{proposition} Under Assumptions ~\ref{ass: gumbel error 1} and ~\ref{ass: gumbel error 2}, for a \( Q \)-function, let \( \hat{V}(s) = \mathbb{E}[V^\star(s)] \), and define \( \alpha_2 = 1 - \exp(-\exp(\omega)) \). Then, there exists:
\label{prop 4}
\begin{equation*}
    \begin{aligned}
     P(Q(s,a)\leq \hat{V}(s)) &= \alpha_2,
    \end{aligned}
\end{equation*}
which shows that \( \hat{V}(s) \) corresponds to the \( \alpha_2 \)-quantile of \( Q(s, a) \) under its CDF.
\end{proposition}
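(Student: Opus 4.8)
The plan is to replay the proof of Proposition~\ref{prop 3}, but to evaluate the Gumbel CDF from Assumption~\ref{ass: gumbel error 2} at the shifted point $\hat{V}(s)-V(s)$ rather than at $0$, and then to invoke Proposition~\ref{prop 2} to rewrite that shift as $\omega\beta(s)$.

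Concretely, I would fix the state $s$ and set $Y := Q(s,a)-V(s)$, which by Assumption~\ref{ass: gumbel error 2} is distributed as $-\mathcal{G}(0,\beta(s))$. Using the Gumbel CDF $z\mapsto\exp(-\exp(-z/\beta(s)))$ for $\mathcal{G}(0,\beta(s))$ together with the reflection $Y=-Z$, $Z\sim\mathcal{G}(0,\beta(s))$, one obtains
\begin{equation*}
P\big(Y\le y\big)=P\big(Z\ge -y\big)=1-\exp\!\big(-\exp(y/\beta(s))\big).
\end{equation*}
I would then rewrite the target probability as $P\big(Q(s,a)\le\hat{V}(s)\big)=P\big(Y\le\hat{V}(s)-V(s)\big)$, noting that $\hat{V}(s)=\mathbb{E}[V^\star(s)]$ is deterministic in $s$ (the expectation integrates out the heteroscedastic Gumbel noise of Assumption~\ref{ass: gumbel error 1}), so that Proposition~\ref{prop 2} applies and gives $\hat{V}(s)-V(s)=\omega\beta(s)$. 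Substituting into the displayed CDF, the scale $\beta(s)$ cancels inside the exponential, yielding
\begin{equation*}
P\big(Q(s,a)\le\hat{V}(s)\big)=1-\exp\!\big(-\exp(\omega)\big)=\alpha_2,
\end{equation*}
which, since $\alpha_2\in(0,1)$, exhibits $\hat{V}(s)$ as the $\alpha_2$-quantile of $Q(s,a)$ under its CDF.

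I do not expect a genuine obstacle here: once Proposition~\ref{prop 2} is granted, the argument is a one-line substitution. The only points that need care are (i) the sign bookkeeping of the two reflections — the ``$-\mathcal{G}$'' in Assumption~\ref{ass: gumbel error 2} and the sign of $\hat{V}(s)-V(s)$ — since a slip there would corrupt the exponent (e.g.\ producing $\exp(-\exp(\omega))$ in place of $1-\exp(-\exp(\omega))$), and I would sanity-check it against the degenerate case $\omega=0$, which must collapse to $P(Q(s,a)\le V(s))=1-\exp(-1)=\alpha_1$, recovering Proposition~\ref{prop 3}; and (ii) stating explicitly the standing assumption, already flagged in the Remark on Definition~\ref{def:soft-value}, that the scale $\beta(s)$ in Assumptions~\ref{ass: gumbel error 1} and~\ref{ass: gumbel error 2} coincides, which is precisely what licenses plugging Proposition~\ref{prop 2}'s identity into this Gumbel CDF.
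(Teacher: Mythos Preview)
Your proposal is correct and matches the paper's own proof essentially line for line: both write $Q(s,a)-V(s)=-g(s)$ with $g(s)\sim\mathcal{G}(0,\beta(s))$ from Assumption~\ref{ass: gumbel error 2}, invoke Proposition~\ref{prop 2} to get the shift $\hat{V}(s)-V(s)=\omega\beta(s)$, and then evaluate the Gumbel CDF so that $\beta(s)$ cancels. Your version is in fact slightly more careful than the paper's, which just writes the four-line chain without the sign discussion or the $\omega=0$ sanity check.
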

\begin{proof}
    See Appendix~\ref{sec:proof-prop-4}.
\end{proof}

Based on the preceding propositions, we have established that \( V(s) \) and \( \hat{V}(s) \) correspond to the \( \alpha_1 \)- and \( \alpha_2 \)-quantiles of \( Q(s, a) \) under its cumulative distribution function. Consequently, it is natural to estimate \( V(s) \) and \( \hat{V}(s) \) using quantile regression, and to estimate \( \beta(s) \) based on the resulting estimates of \( V(s) \) and \( \hat{V}(s) \).

\begin{definition}[Quantile Regression]
The quantile regression objective is defined as:
\[
\mathcal{L}_{\text{qr}}(u, \tau) = u \cdot (\tau - \mathbb{I}(u < 0)),
\]
where \( u = y - \hat{y} \) is the residual, \( \tau \in [0,1] \) is the target quantile level, and \( \mathbb{I}(\cdot) \) denotes the indicator function. 

\begin{algorithm}[t]
\caption{Quantile $Q$-Learning}\label{alg:QQL-vr-weighted-extrapolate}
\begin{algorithmic}[1]
\Require Initialized networks $Q_\theta$, $V_{\psi_1}$, $V_{\psi_2}$, dataset $\mathcal{D}$, mild generalization and policy constraint hyperparameters $\lambda$ and $\zeta$
\For{each gradient step}
    \State Sample a batch of transitions from dataset $\mathcal{D}$
    \State Update value networks by minimizing Eqs.~\ref{eq: loss of v_soft imagination mild}.
    \State Update $Q$-function $Q_\theta$ by minimizing Eq.~\ref{eq: loss of q}
    \State Update policy $\pi_\phi$ by maximizing Eq.~\ref{extrapolate-4}
\EndFor
\end{algorithmic}
\end{algorithm}

\label{def:quantile-loss}
\end{definition}
To learn \( V(s) \) and \( \hat{V}(s) \), we parameterize them using \( \psi_1 \) and \( \psi_2 \), respectively. Leveraging the quantile regression objective defined in Definition~\ref{def:quantile-loss}, we formulate the learning objectives for \( V_{\psi_1}(s) \) and \( \hat{V}_{\psi_2}(s) \) as follows:
 \begin{equation}
 \label{eq: loss of v_soft}
     \begin{aligned}
         \mathcal{J}(\psi_1) =\mathbb{E}_{(s,a)\sim \mathcal{D}}\, \mathcal{L}_{qr}(Q_\theta(s,a) - V_{\psi_1}(s),\alpha_1).
     \end{aligned}
 \end{equation}
  \begin{equation}
 \label{eq: loss of v}
     \begin{aligned}
         \mathcal{J}(\psi_2) =\mathbb{E}_{(s,a)\sim \mathcal{D}}\, \mathcal{L}_{qr}(Q_\theta(s,a) - \hat V_{\psi_2}(s),\alpha_2).
     \end{aligned}
 \end{equation}
For a \( Q \)-function satisfying Assumption \ref{ass: gumbel error 1}, \(Q(s,a) + \omega \beta(s)\) is an unbaised estimation of the optimal $Q$-function \(Q^{\star}(s,a)\). By Proposition~\ref{prop 2}, the estimator for $Q^\star$ can be further expressed as $Q(s,a)+(\hat V(s)-V(s))$. Therefore, the \( Q \)-function, parameterized by \( \theta \), is learned using the mean-squared Bellman error :
\begin{equation}
     \label{eq: loss of q}
     \mathcal{L}_Q(\theta)=\mathbb{E}_{(s,a,s')\sim \mathcal{D}}\, \Big[Q_\theta(s,a) + (\hat{V}_{\psi_2}(s) - V_{\psi_1}(s)) - r(s,a)-\gamma \hat V_{\psi_2}(s')\Big]^2.
\end{equation}
Using the estimator  \( \beta(s) = (\hat{V}_{\psi_2}(s) - V_{\psi_1}(s)) / \omega \) and a KL-constrained policy objective, we can obtain a $\beta$-free AWR-style policy objective with a policy constraint weight $\zeta$:
\begin{equation}
        \label{extrapolate-4}
    \mathcal{L}_\pi(\phi)=~\mathbb{E}_{(s,a) \sim \mathcal{D}}\exp\Big(\frac{\omega (Q_{\theta}(s,a) - V_{\psi_2}(s))}{\zeta(\hat{V}_{\psi_2}(s) - V_{\psi_1}(s))}+\frac{\omega(Q_{\theta}(s,a)-V_{\psi_1}(s))}{\hat{V}_{\psi_2}(s) - V_{\psi_1}(s)}\Big)\log\pi_\phi(s,a).
\end{equation}
The detailed derivation can be found in Appendix~\ref{sec: IPE}. Thus far, we have established the basic formulation of our proposed algorithm, a \(\beta\)-free variant of XQL, which we refer to as \textbf{Quantile \( Q \)-Learning}. Further enhancements to this approach will be presented in the following sections.

\subsection{Regulating Value Functions with Mild Generalization}
\label{subsec: DMG & VR}
By estimating $\beta$ via quantile regression, we construct a $\beta$-free variant of XQL. However, similarly to the original XQL, this approach remains fully in-sample and has been shown to be overly conservative \citep{mao2024doubly}. To address this crucial issue, we introduce mild generalization into the quantile regression objective.

A straightforward approach is to apply Doubly Mild Generalization (DMG) \citep{mao2024doubly} directly to Eq.\ref{eq: loss of v_soft} and Eq.\ref{eq: loss of v}. However, approximating $V^{\star}(s') = \max_{a' \sim \pi(\cdot \mid s')} Q^{\star}(s', a')$ by sampling actions from $\pi(\cdot \mid s')$ and taking the $\alpha_2$-quantile of $Q_\theta(s', a')$ can introduce extrapolation error \citep{fujimoto2019off}, which may propagate throughout the learning process. To mitigate this, we conservatively estimate $V^\star(s')$ by subtracting $\omega \beta(s')$, with $\beta(s')$ providing an estimation of uncertainty. Applying this conservative estimation systematically to all value functions throughout mild generalization leads to $V(s')$ as the $\alpha_0$-quantile of $Q(s', a')$, and $\hat{V}(s)$ as the $\alpha_1$-quantile of $Q(s', a')$, whereby $\alpha_0 = 1 - \exp(- \exp(-\omega))$. Ablation results in the experiment section demonstrate the effectiveness of this modification. The updated objective for training the value networks incorporating mild generalization can therefore be expressed as:
\begin{equation}
\begin{aligned}
    \mathcal{J}(\psi_1) &=\mathbb{E}_{(s,a)\sim \mathcal{D}}\, \mathcal{L}_{qr}(Q_\theta(s,a) - V_{\psi_1}(s),\alpha_1)+\mathbb{E}_{s'\sim \mathcal{D},a' \sim \pi(\cdot \mid s')}\, \lambda \, \mathcal{L}_{qr}(Q_\theta(s',a') - V_{\psi_1}(s'),\alpha_0),\\
     \mathcal{J}(\psi_2)&=\mathbb{E}_{(s,a)\sim \mathcal{D}}\, \mathcal{L}_{qr}(Q_\theta(s,a) - \hat V_{\psi_2}(s),\alpha_2)+\mathbb{E}_{s'\sim \mathcal{D},a' \sim \pi(\cdot \mid s')}\, \lambda \, \mathcal{L}_{qr}(Q_\theta(s',a') - \hat V_{\psi_2}(s'),\alpha_1).
\end{aligned}
\label{eq: loss of v_soft imagination mild}
\end{equation}

In this objective, a mild generalization hyperparameter $\lambda$ is introduced to control the degree of generalization applied. Empirically, we find that $\lambda = 1$ works well in most cases.

\section{Experiments}
\subsection{Experimental Setting and Baseline Methods}

\paragraph{Benchmark Datasets} We conduct experiments on two offline RL benchmarks: the widely used D4RL suite \citep{fu2020d4rl} and the more challenging, near–real-world NeoRL2 benchmark \citep{gao2025neorl}, which incorporates real-world complexities such as high-latency transitions and global safety constraints. For D4RL, our evaluation spans locomotion tasks (Hopper, HalfCheetah, and Walker2d) across various dataset types, including medium, medium-replay, and medium-expert. We also evaluate on the Adroit manipulation tasks (Pen, Door, and Hammer), as well as the AntMaze-Umaze task under both fixed and randomized initial states and goals. For NeoRL2, we assess performance on the RocketRecovery and SafetyHalfCheetah tasks.
\paragraph{Baseline Methods} Our approach is compared against a diverse set of offline RL baselines, encompassing various design principles. These include policy-constrained methods such as Behavior Cloning (BC) and TD3+BC \citep{fujimoto2021minimalist}; conservative model-free algorithms like CQL \citep{kumar2020conservative}; in-sample learning methods including IQL \citep{kostrikov2021offline} and XQL \citep{garg2023extreme}; a stabilized variant of XQL, MXQL \citep{omura2024mxql}; an uncertainty-aware method, EDAC \citep{an2021uncertainty}; and the batch-constrained off-policy algorithm BCQ \citep{fujimoto2019off}.

\paragraph{Hyperparameter Setting} For every task–dataset combination we fix the generalization coefficient to $\lambda=1.0$ and the policy constraint weight to $\zeta=1.0$. Using this single hyperparameter setting for all of our experiments (Table~\ref{tab:main-results}) highlights the robustness of our method, whereas baseline algorithms are evaluated with the dataset‑specific, tuned hyperparameters reported in prior work. We also compare our approach and XQL under a consistent hyperparameter setting across tasks, revealing that the original XQL algorithm suffers from performance degradation without per-task tuning, while our method maintains strong performance (Table~\ref{tab:consistent-xql}). The detailed hyperparameter settings for the baseline algorithms are provided in Appendix~\ref{sec: baseline-para}.

\subsection{Main Results}
In this section, the results of the proposed algorithm are presented on D4RL datasets, and its performance is compared with baseline methods. Our approach consistently outperforms or matches state-of-the-art in-sample and model-free offline RL baselines under dataset-specific hyperparameter tuning using a unified hyperparameter setting across diverse domains (Locomotion, Adroit, and AntMaze) and dataset types. The results are summarized in Table~\ref{tab:main-results}. Furthermore, training curve comparisons are presented between two improved versions of XQL: MXQL and our proposed QQL. As shown in Figure~\ref{fig:combined_QQL_vs_MXQL}, QQL demonstrates more stable training dynamics, with lower standard deviation, and achieves more optimal performance. In terms of computational cost, QQL requires approximately 1.6× the wall-clock training time of XQL (4 hours vs. 2.5 hours for 1M timesteps on an RTX 4070), primarily due to the additional quantile regression and value estimation. We consider this overhead acceptable given the gains in stability and final performance.

\begin{table*}[t]
    \centering
    \small
    \setlength{\tabcolsep}{4pt} 
    \begin{tabular}{c|c|cccccc|c}
        \toprule
        \textbf{Domain} & \textbf{Dataset} & \textbf{BC} & \textbf{TD3+BC} & \textbf{CQL} & \textbf{IQL} & \textbf{XQL} & \textbf{MXQL} & \textbf{QQL (Ours)} \\
        \midrule
        \multirow{9}{*}{Gym Locomotion} 
        & hopper-med-exp & 52.5 & 98.0 & 105.4 & 91.5 & 111.2 & 110.7 & \textbf{112.5$\pm$1.3} \\
        & hopper-med & 52.9 & 59.3 & 58.5 & 66.3 & 74.2 & \textbf{80.9} & 77.3 $\pm$ 3.8 \\
        & hopper-med-rep & 18.1 & 60.9 & 95.0 & 94.7 & 100.7 & \textbf{102.7} & 101.1$\pm$2.1\\
        & halfcheetah-med-exp & 55.2 & 90.7 & 91.6 & 86.7 & 94.2 & 92.1 & \textbf{94.3 $\pm$ 1.8} \\
        & halfcheetah-med & 42.6 & 48.3 & 44.0 & 47.4 & 48.3 & 47.7 & \textbf{49.5 $\pm$ 0.3} \\
        & halfcheetah-med-rep & 36.6 & 44.6 & 45.5 & 44.2 & 45.2 & 45.7 & \textbf{46.6 $\pm$ 0.3} \\ 
        & walker2d-med-exp & 107.5 & 110.1 & 108.8 & 112.7 & 112.7 & 111.2 & \textbf{113.2$\pm$0.3} \\
        & walker2d-med & 75.3 & 83.7 & 72.5 & 78.3 & 84.2 & 83.8 & \textbf{85.2 $\pm$1.3} \\
        & walker2d-med-rep & 26.0 & 81.8 & 77.2 & 73.9 & 82.2 & 83.6 & \textbf{90.2 $\pm$ 2.1} \\
        \midrule
        \multirow{6}{*}{Adroit}
        & pen-human & 99.7 & 10.0 & 58.9 & 106.2 & 105.3 & 122.1 & \textbf{128.3 $\pm$ 4.1}\\
        & pen-cloned & 99.1 & 52.7 & 14.7 & 114.1 & 112.6 & \textbf{117.4} & 115.2 $\pm$ 4.7\\
        & door-human & 9.4 & -0.1 & 13.3 & 13.5 & 13.2 & 18.3 & \textbf{21.1 $\pm$ 5.6} \\
        & door-cloned & 3.4 & -0.2 & -0.1 & \textbf{9.0} & 1.1 & 1.8 & 8.8 $\pm$ 4.1 \\
        & hammer-human & 12.6 & 2.4 & 0.3 & 6.9 & 7.3 & \textbf{14.7} & 8.4 $\pm$ 3.1\\
        & hammer-cloned & 8.9 & 0.9 & 0.3 & \textbf{11.6} & 1.1 & 11.1 &  8.0 $\pm$ 4.4\\
        \midrule
        \multirow{2}{*}{AntMaze}
        & antmaze-umaze & 68.5 & \textbf{98.5} & 94.8 & 84.0 & 90.3 & 88.3 & 88.5 $\pm$ 6.1 \\
        & antmaze-umaze-diverse & 64.8 & 71.3 & 53.8 & 79.5 & 77.2 & 53.2 & \textbf{81.3 $\pm$ 4.1} \\
        \bottomrule
    \end{tabular}
    \caption{\textbf{Main Experimental Results on D4RL.} Average normalized performance on Gym Locomotion, Adroit, and AntMaze tasks. All hyperparameters for our QQL method are kept \textbf{consistent} across all environments and datasets, while baseline algorithms are individually \textbf{tuned} for each setting. Results are averaged over 5 random seeds.}
    \label{tab:main-results}
\end{table*}

\begin{table*}[htbp]
    \centering
    \begin{tabular}{c|c|cccccc|c}
    \toprule
     \textbf{Dataset} & \textbf{DATA} & \textbf{BC} & \textbf{CQL} & \textbf{EDAC} & \textbf{BCQ} & \textbf{TD3+BC} & \textbf{XQL} & \textbf{QQL (Ours)}\\
    \midrule
    RocketRecovery & 75.27 & 72.75& 74.32 & 65.65 & 76.46 & 79.74 & 81.1 & \textbf{83.2$\pm$4.7}\\
    SafetyHalfCheetah & \textbf{73.56} & 70.16 & 71.18 & 53.11 & 54.65 & 68.58 & 66.2 & 60.3$\pm$4.1 \\
    \bottomrule
    \end{tabular}
    \caption{\textbf{Main Experimental Results on NeoRL2.} Average normalized performance on RocketRecovery and SafetyHalfCheetah tasks. All hyperparameters for our QQL method are kept consistent, while baseline algorithms are individually \textbf{tuned} for each setting. The \textbf{DATA} column reports the normalized scores of the trajectories contained in the offline datasets. Results are averaged over 3 random seeds.}
    \label{tab:neorl2-results}
\end{table*}

\begin{figure}[t]
    \centering
        \begin{subfigure}[b]{0.55\textwidth} 
                \centering
                \includegraphics[width=\textwidth]{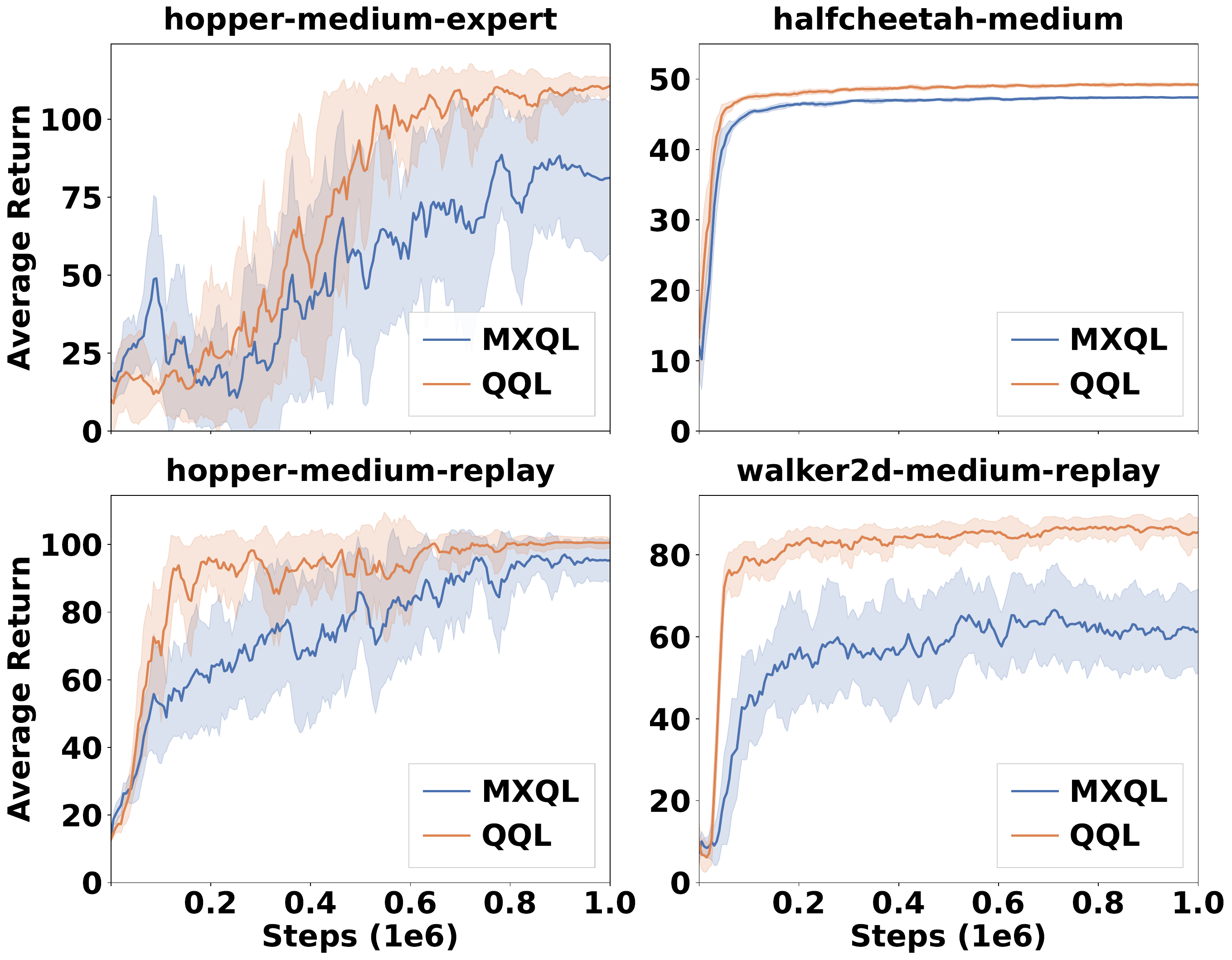} 
                \caption{\textbf{Training curve comparison between QQL and MXQL.} QQL demonstrates more stable training behavior across tasks compared to MXQL.}
                \label{fig:combined_QQL_vs_MXQL} 
        \end{subfigure}
        \hfill 
        \begin{subfigure}[b]{0.4\textwidth}
            \centering
            \includegraphics[width=\textwidth]{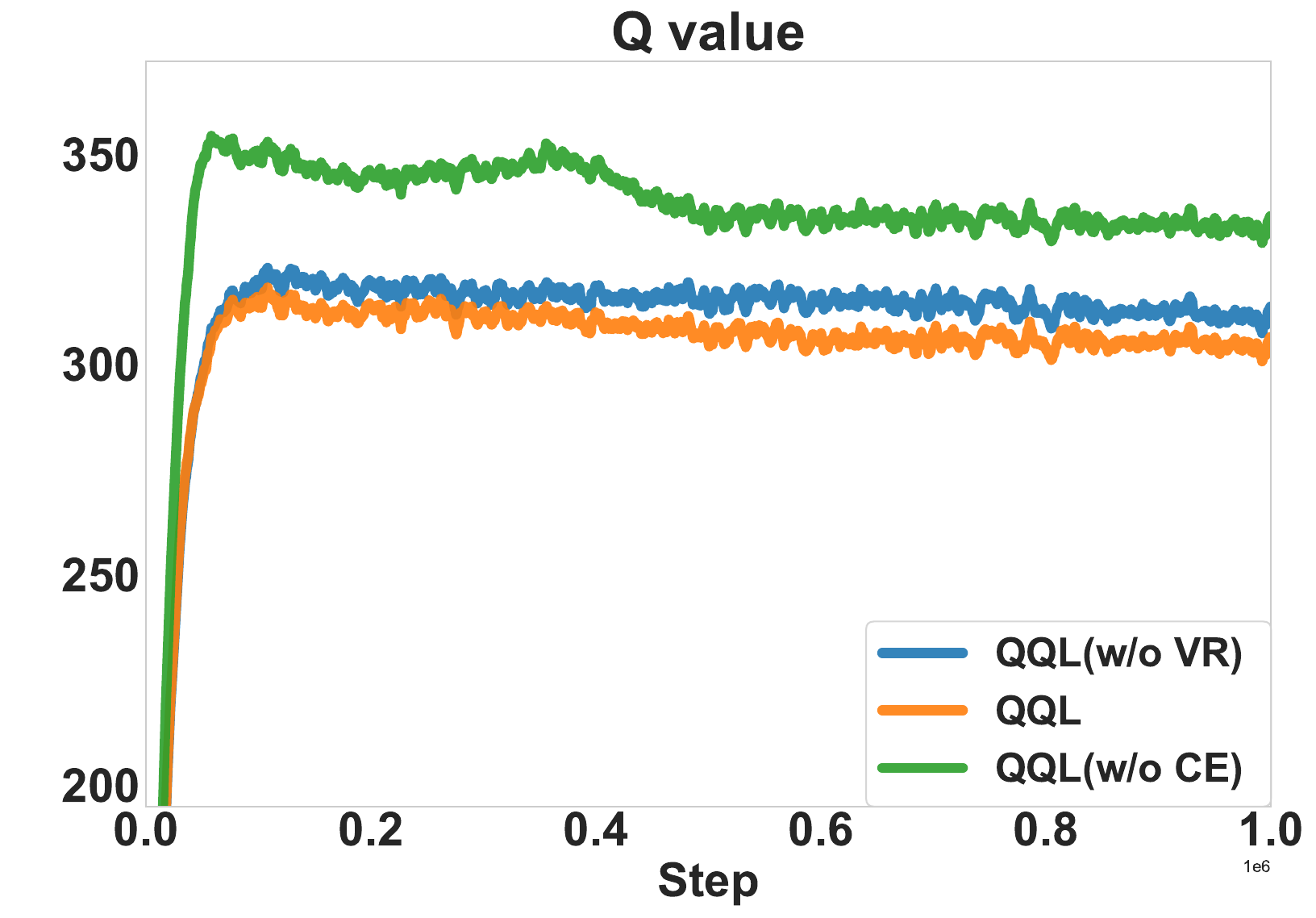}
            \caption{\textbf{Q-value plots for value regularization and conservative estimation ablation.} We visualize the Q-value estimates of the original QQL method, its variant without value regularization (w/o VR), and its variant without conservative estimation (w/o CE), as described in the methodology section. The ablation results demonstrate that the proposed approach effectively stabilizes Q-value estimation.}
            \label{fig:q-mean}
        \end{subfigure}
    \caption{\textbf{QQL Performance and Ablation Analysis.} We visualize the following results: (\subref{fig:combined_QQL_vs_MXQL}) Training stability comparison between QQL and MXQL. (\subref{fig:q-mean}) Q-value plots demonstrating the stabilization effect of value regularization and conservative estimation.}
\end{figure}

\begin{table*}[t]
    \centering
    \begin{tabular}{c|ccc}
    \toprule
        \textbf{Domain} & \textbf{Gym Locomotion} & \textbf{Adroit} & \textbf{AntMaze}\\
    \midrule
        XQL (Dataset-specific tuning) & 83.7$\pm$2.7 & 40.6$\pm$6.3 & 83.8$\pm$6.9 \\
        XQL (Consistent per domain) & 75.5$\pm$3.6  & 38.6$\pm$4.5 & 69.6$\pm$10.7 \\
        XQL (Consistent across all domains) & 73.6$\pm$4.1 & 37.8$\pm$4.1 & 67.3$\pm$9.2 \\
    \midrule
        QQL (Consistent across all domains) & \textbf{85.6$\pm$1.4} & \textbf{48.3$\pm$4.3} & \textbf{84.9$\pm$5.1} \\
    \bottomrule
    \end{tabular}
    \caption{\textbf{Comparisons with Consistent Hyperparameters} We compare our approach—using a single, consistent set of hyperparameters—against three XQL variants: (1) with dataset-specific tuning, (2) with domain-level consistent hyperparameters, and (3) with a single set shared across all domains. Performance is reported as the average D4RL score across all dataset types and tasks within each domain. Detailed hyperparameter settings for the different XQL variants are provided in Appendix~\ref{subsec: XQL para}. All results are averaged over 5 random seeds.}
    \label{tab:consistent-xql}
\end{table*}

\subsection{Comparison with Consistent Hyperparameter Setting}
We compare the performance of our method with XQL \citep{garg2023extreme} under a consistent hyperparameter setting across diverse datasets and domains. As shown in Table~\ref{tab:consistent-xql}, XQL suffers significant performance degradation when using the consistent hyperparameters across settings, while our method maintains strong performance. This highlights the robustness and generality of our approach.

\begin{table*}[htbp]
    \centering
    \begin{tabular}{c|c|ccccc}
        \toprule
        \textbf{Dataset} & \textbf{Setting} & \textbf{0.25} & \textbf{0.5} & \textbf{1.0} & \textbf{2.0} & \textbf{4.0} \\
        \midrule
        \multirow{2}{*}{hopper-med-exp} 
        & $\lambda=1.0$ ($\zeta$ varies) & 113.2$\pm$0.3 & 113.1$\pm$0.3 & 112.5$\pm$1.3 & 112.9$\pm$0.3 & 112.6$\pm$0.6 \\
        & $\zeta=1.0$ ($\lambda$ varies) & 110.7$\pm$3.2 & 111.6$\pm$2.1 & 112.5$\pm$1.3 & 112.1$\pm$1.7 & 112.6$\pm$0.3 \\
        \midrule
        \multirow{2}{*}{halfcheetah-med}
        & $\lambda=1.0$ ($\zeta$ varies) & 50.0$\pm$0.1 & 49.6$\pm$0.1 & 49.5$\pm$0.3 & 49.4$\pm$0.1 & 49.1 $\pm$ 0.1\\
        & $\zeta=1.0$ ($\lambda$ varies) & 49.6$\pm$0.1 & 49.6$\pm$0.1 & 49.5$\pm$0.3 & 49.6$\pm$0.1 & 49.6$\pm$0.2 \\
        \midrule
        \multirow{2}{*}{walker2d-med-rep}
        & $\lambda=1.0$ ($\zeta$ varies) & 90.9$\pm$0.8 & 90.4$\pm$1.2 & 90.2$\pm$2.1 & 89.9$\pm$1.5 & 88.4$\pm$0.4 \\
        & $\zeta=1.0$ ($\lambda$ varies) & 88.8 $\pm$ 1.4 & 90.5 $\pm$ 1.0 & 90.2$\pm$2.1 & 88.5 $\pm$ 0.6 & 89.3 $\pm$ 0.6 \\
        \bottomrule
    \end{tabular}
    \caption{\textbf{Ablations on $\lambda$ and $\zeta$}. Average normalized performance on D4RL tasks. The first row for each task shows performance when varying $\zeta$ with fixed $\lambda=1.0$, and the second row shows performance when varying $\lambda$ with fixed $\zeta=1.0$.}
    \label{tab:ablations_lambda_zeta}
\end{table*}

\subsection{Ablation Studies}
\label{subsec: Ablation Studies}
\paragraph{Ablations on Value Regulation and Conservative Estimation}

Ablation studies are performed to isolate the contributions of Value Regulation (VR) and Conservative Estimation (CE) to the performance and stability of the QQL algorithm. Experiments on HalfCheetah Medium, Walker2d Medium Replay, and Hopper Medium Expert (Figure~\ref{fig:combined_comparison}) compare the full QQL method with variants that exclude value regulation (QQL w/o VR) or conservative estimation (QQL w/o CE), as described in the methodology section.

Removing either component results in noticeable performance drops, while omitting both leads to substantial degradation. This highlights the role of value regulation in mitigating overestimation and stabilizing value estimates for robust offline learning, and the importance of conservative estimation in encouraging pessimistic value functions to better handle distributional shifts and limited coverage.

These results demonstrate that both VR and CE are essential for effective offline RL. Their complementary effects yield a more stable and performant learning algorithm. All ablation experiments use consistent hyperparameter settings.

Additionally, a comparison of the Q-values from the original QQL method against those from QQL without value regularization and without conservative estimation was plotted. The results demonstrate that both value regularization and conservative estimation help stabilize Q-value estimation and improve conservativeness. The ablation study is performed on the Hopper Medium Expert dataset, with the results shown in Figure~\ref{fig:q-mean}.

\paragraph{Ablations on the Hyperparameters $\lambda$ and $\zeta$}
Ablation studies are conducted to investigate the impact of the mild generalization coefficient $\lambda$ and the policy constraint weight $\zeta$. Experiments are performed on three Gym Locomotion tasks: HalfCheetah Medium, Walker2d Medium Replay, and Hopper Medium Expert. For both $\lambda$ and $\zeta$, we fix one hyperparameter at $1.0$ while varying the other over the set $[0.25, 0.5, 1.0, 2.0, 4.0]$. The results are summarized in Table~\ref{tab:ablations_lambda_zeta}.

The ablation results show that the scores exhibit minimal variation with changes in the hyperparameters $\lambda$ and $\zeta$, demonstrating the robustness of the proposed QQL method to these hyperparameters.
The ablation results show that the scores exhibit minimal variation with changes in the hyperparameters $\lambda$ and $\zeta$, demonstrating the robustness of the proposed QQL method to these hyperparameters.
\begin{figure*}[t]
    \centering
    \includegraphics[width=\linewidth]  {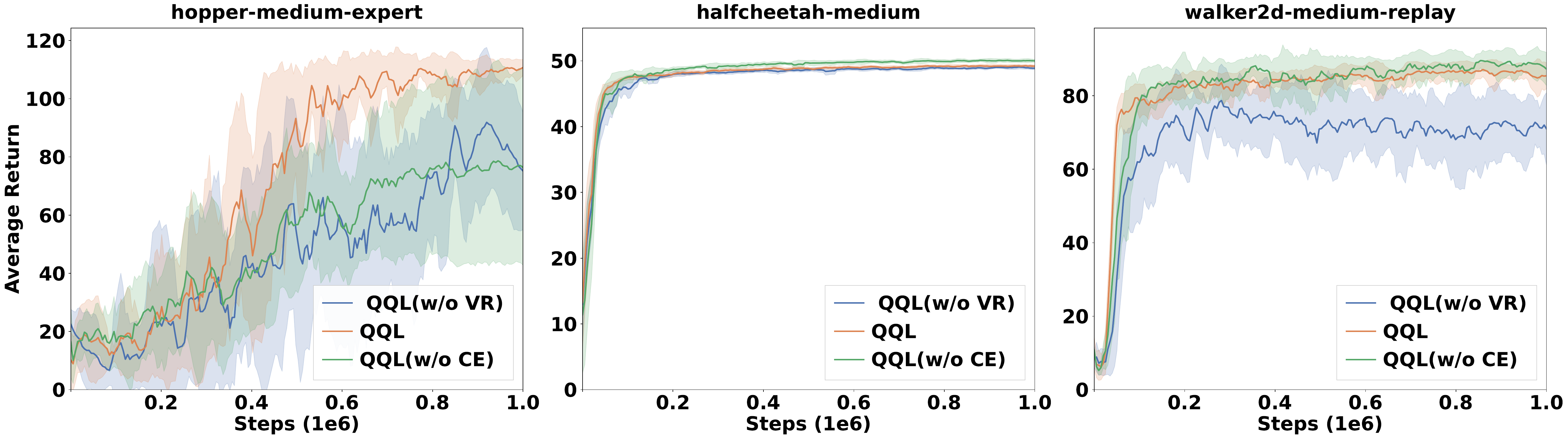}
    \caption{\textbf{Ablation Studies on Value Regulation and Conservative Estimation} Comparison of QQL performance to its variant without value regulation (w/o VR) and its variant without conservative estimation (w/o CE) adapted in the methodology section. The hyperparameters are consistent across variants. }
    \label{fig:combined_comparison}
\end{figure*}

\begin{figure}[t]
    \centering
    \includegraphics[width=\linewidth]{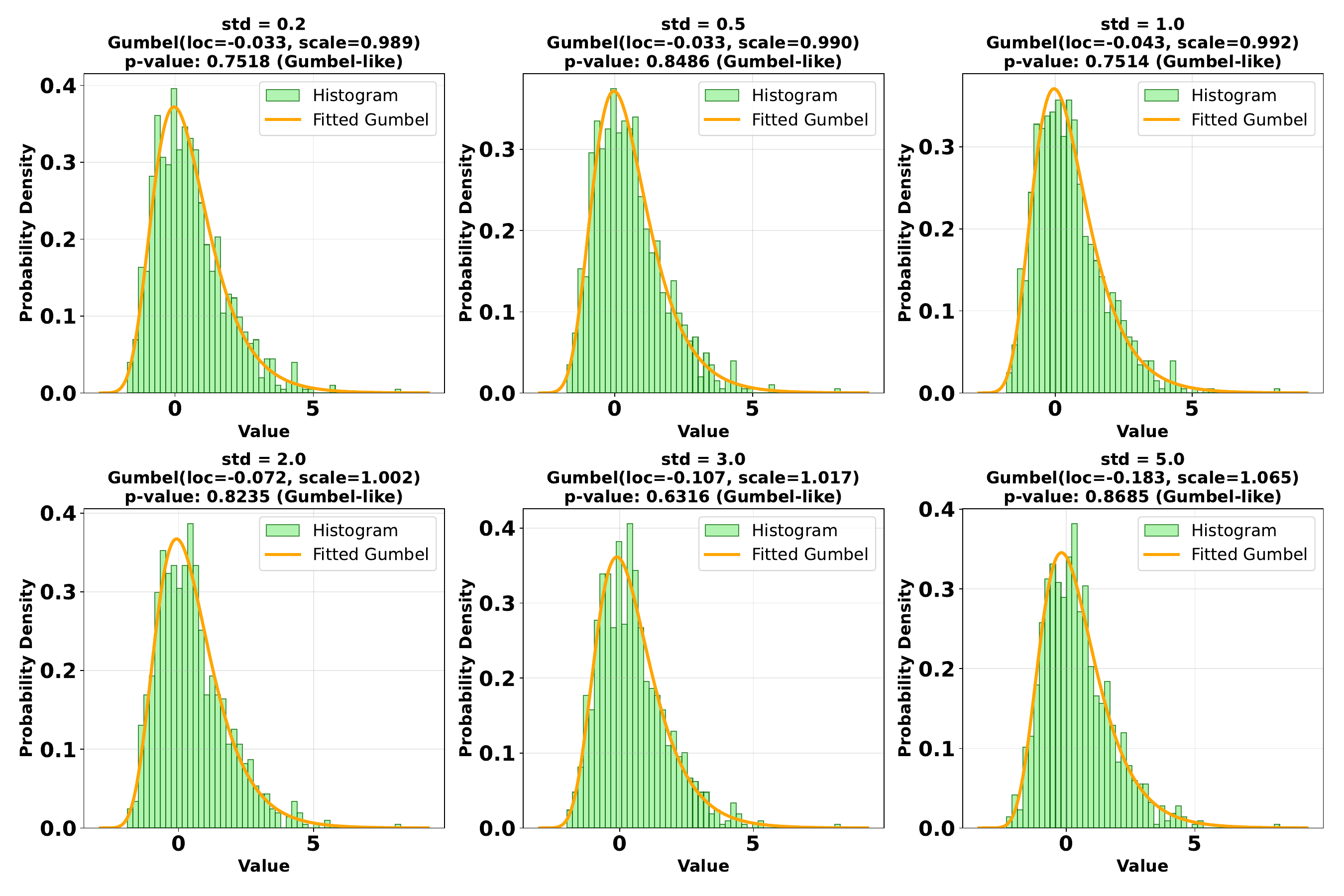}
    \caption{\textbf{Toy Example for $\beta(s)$ Scale.} Empirical distributions of $-Q(s, a)$, along with the corresponding fitted Gumbel distributions and their estimated location, scale, and goodness-of-fit p-values.}
    \label{fig:gumbel-example}
\end{figure}

\subsection{Experiments on $\beta(s)$ Scale}
In Definition~\ref{def:soft-value}, we actually assume that the $\beta(s)$ scale of Assumption~\ref{ass: gumbel error 1} and Assumption~\ref{ass: gumbel error 2} is similar, as discussed in the remark of Definition~\ref{def:soft-value}. In this section, we provide empirical evidence by leveraging the results from a toy example. Additional empirical support, including a residual analysis on D4RL tasks, is provided in Appendix~\ref{sec: residual analysis}.

\paragraph{Toy Example Construction.}
To illustrate that Assumption~\ref{ass: gumbel error 2} can be approximately satisfied given Assumption~\ref{ass: gumbel error 1}, we construct a toy example where the Q-value estimation process aligns with the desired Gumbel properties. Under Assumption~\ref{ass: gumbel error 1}, we model $-Q(s,a)$ as a Gumbel-distributed random variable with location $-Q^{\star}(s,a)$ and scale $\beta(s)$. We fix the state $s$ and generate $-Q^{\star}(s,a)$ using a randomly initialized neural network that takes action $a$ as input. To simulate $-Q(s,a)$, we add Gumbel noise $g(s) \sim \mathcal{G}(0, \beta(s))$ to the output of the network. To test Assumption~\ref{ass: gumbel error 2}, we sample actions from a Gaussian policy and analyze the resulting distribution of $-Q(s,a)$ values to check whether it retains the Gumbel form. The procedure involves computing $-Q^{\star}(s,a)$ for each sampled action, adding Gumbel noise, and comparing the empirical distribution of $-Q(s,a)$ to a theoretical Gumbel distribution. We fix the Gumbel noise scale at $\beta(s) = 1$ and vary the standard deviation of the Gaussian policy over a range of values.

\paragraph{Results and Discussion.}
Figure~\ref{fig:gumbel-example} displays the empirical distributions of $-Q(s,a)$ under varying Gaussian policy variances, overlaid with Gumbel distributions fitted via maximum likelihood estimation. The strong alignment between the empirical histograms and the fitted Gumbel curves, which is confirmed with high p-values in a goodness-of-fit test, shows that the distribution of $-Q(s,a)$ maintains its Gumbel form despite the stochasticity of Gaussian sampling. The shape and scale of these fitted distributions are primarily controlled by the additive Gumbel noise used in the simulation. This supports our premise that the scale parameter $\beta(s)$ stays consistent between Assumptions~\ref{ass: gumbel error 1} and \ref{ass: gumbel error 2}, even when actions are sampled from a distribution rather than being deterministically fixed.

\section{Conclusion and Future Work}
In this work, we have addressed critical limitations of prior Extreme $Q$-Learning (XQL) approaches—namely, the need for dataset-specific hyperparameter tuning and the instability of training. We have introduced a novel method for estimating the temperature coefficient $\beta$ via quantile regression, requiring only mild statistical assumptions. To further bolster stability, we have incorporated a value regularization mechanism inspired by constrained value learning that encourages mild generalization while preserving performance.

Empirical results across diverse offline RL benchmarks, including D4RL and NeoRL2, validate our approach: it achieves performance that is competitive with—or even surpasses—XQL and its stabilized variant MXQL, while avoiding the hyperparameter overfitting and erratic training behavior they exhibit. Crucially, our method maintains robust performance using a consistent set of hyperparameters across all tasks and domains, highlighting its practicality and general applicability in real-world high-stakes settings.

Looking ahead, a natural extension of this work is to adapt our approach to the online reinforcement learning setting by leveraging developments in online variants of Extreme $Q$-Learning. Integrating our quantile-based temperature estimation and value regularization into the online learning framework could improve training stability in interactive environments. This generalization has the potential to make Extreme $Q$-Learning more robust and widely applicable in real-time decision-making scenarios.

\section*{Acknowledgments}

This work was supported by the National Natural Science Foundation of China under Grant No. 62233004, 
Jiangsu Provincial Scientific Research Center of Applied Mathematics under Grant No. BK20233002, 
Jiangsu Funding Program for Excellent Postdoctoral Talent under Grant No. 2025ZB481, 
and China Postdoctoral Science Foundation under Grant No. 2025M771682.

\bibliography{main}
\bibliographystyle{tmlr}

\appendix
\section{Derivation of Policy Objective}
\label{sec: IPE}

A common challenge in policy learning arises when the sampling policy $\mu(\cdot \mid \cdot)$ is suboptimal, which often results in overly conservative estimates from in-sample approaches. To address this issue, existing methods like Advantage-Weighted Regression (AWR) aim to mitigate such inherent conservativeness. Specifically, the AWR-style policy objective minimizes the cross-entropy loss $\mathbb{C(\mu^{*}\mid\mid \pi)}$, where $\mu^{*}(a\mid s) \propto \mu(a\mid s)\exp(Q(s,a)/\beta(s))$. We define the normalizer $V(s) = \beta(s)\int\mu(a\mid s) \exp(\frac{Q(s,a)}{\beta(s)})da $. Thus, $\mu^{*}$ can be expressed as $\mu^{*}(a\mid s) = \mu(a \mid s) \exp(\frac{Q(s,a) - V(s)}{\beta(s)})$.

Given that $\mu^{*}$ shares the same support as $\mu$ and is considered a superior policy, we construct a less conservative policy $\pi$ by remaining close to $\mu^{*}$. Consequently, the following objective is maximized:
\begin{equation}
    \label{eq: extrapolation}
\max_\phi \mathbb{E}_{s \sim \mathcal{D},a \sim {\pi_\phi}( \cdot \mid s)}Q_{\theta}(s,a) - \nu \mathbb{E}_{(s,a) \sim \mathcal{D}}\mathbf{KL}(\mu^{*}\mid \mid \pi_{\phi}(\cdot \mid s)).
\end{equation}
This objective is equivalent to:
\begin{equation}
    \label{extrapolation-2}
\max_\phi \mathbb{E}_{s \sim \mathcal{D},a \sim {\pi_\phi}( \cdot \mid s)Q_{\theta}(s,a)} - \nu \mathbb{E}_{(s,a) \sim \mathcal{D}}[\exp((Q_{\theta}(s,a) - V_{\psi_1})/\beta(s))\log\pi_{\phi}(a \mid s) ].
\end{equation}
Here, $\nu = \zeta\beta(s)$, where $\zeta$ is a positive, adjustable parameter referred to as the policy constraint weight, balancing policy optimization and the conservative constraint.

Furthermore, Eq. \ref{extrapolation-2} resembles the MaxEnt RL objective and possesses a closed-form solution:
$$
\label{extrapolate-3}
\begin{aligned}
\pi^{*}(a \mid s) &= \mu^{*}(a \mid s)\exp\left(\frac{Q(s,a) - V'(s)}{\beta(s)}\right)\\
&= \mu(a \mid s)\exp\left(\frac{Q(s,a) - V'(s)}{\zeta\beta(s)}\right)\exp\left(\frac{Q(s,a) - V(s)}{\beta(s)}\right),
\end{aligned}
$$
where $V'(s) = \beta(s)\int\mu^{*}(a\mid s) \exp(\frac{Q(s,a)}{\zeta\beta(s)})da$ serves as a normalizer. Based on this, the policy optimization objective is modified to minimize the KL divergence between $\pi^{*}$ and $\pi$. In practice, $V_{\psi_1}$ and $V_{\psi_2}$ approximate the normalizers $V(s)$ and $V'(s)$, respectively, leading to the following policy optimization objective:
$$
\max_\phi \mathbb{E}_{(s,a) \sim \mathcal{D}}\exp\left(\frac{Q_{\theta}(s,a) - V_{\psi_2}(s)}{\zeta((V_{\psi 2}(s)-V_{\psi 1}(s))/\omega)} + \frac{Q_{\theta}(s,a)-V_{\psi_1}(s)}{(V_{\psi 2}(s)-V_{\psi 1}(s))/\omega}\right)\log\pi_\phi(s,a).
$$

\section{Omitted Proofs}

\subsection{Proof for Proposition 1}
\label{sec:proof-prop-1}
To begin with, we introduce the following lemmas:
\begin{lemma}[\citealt{Gumbel1935Valeurs}] Given a continuous random variable $\alpha$ and a Gumbel noise $g\sim\mathcal{G}(0,\beta)$, the following identity exists:
     \[\beta \log \int \exp \left( \frac{\alpha + g}{\beta} \right)d \alpha = \beta \log \int \exp \left( \frac{\alpha}{\beta} \right)d \alpha + g .\] 
\label{lemma 1}
\end{lemma}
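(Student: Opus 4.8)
The plan is to exploit the fact that $g$ does not depend on the variable of integration, so it factors out of the integral as the constant $e^{g/\beta}$; the Gumbel provenance of $g$ plays no role in this particular identity (it matters only later, when Lemma~\ref{lemma 1} is combined with Assumptions~\ref{ass: gumbel error 1} and~\ref{ass: gumbel error 2}). Read pathwise, for a fixed realization of $g$ the claim is a pure change-of-scale statement about the ``soft-max'' functional $\alpha \mapsto \beta\log\int\exp(\alpha/\beta)$.

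Concretely, the steps would be: (i) note $\beta>0$ and write $\exp((\alpha+g)/\beta)=e^{g/\beta}\exp(\alpha/\beta)$, which is just additivity of the exponent; (ii) integrate over $\alpha$ and use linearity of the integral to obtain $\int\exp((\alpha+g)/\beta)\,d\alpha = e^{g/\beta}\int\exp(\alpha/\beta)\,d\alpha$; (iii) apply $\beta\log(\cdot)$ to both sides, using $\log(xy)=\log x+\log y$ and $\beta\cdot(g/\beta)=g$, which yields
\[
\beta\log\int\exp\!\left(\frac{\alpha+g}{\beta}\right)d\alpha \;=\; g + \beta\log\int\exp\!\left(\frac{\alpha}{\beta}\right)d\alpha ,
\]
the desired identity.

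I do not expect a genuine obstacle here; the only points worth a sentence are that the integral $\int\exp(\alpha/\beta)\,d\alpha$ must be finite and positive for the logarithm to be well defined --- which holds in the regime where the lemma is applied (bounded action domain or suitably decaying $Q$), so that Proposition~\ref{prop 1} may invoke it safely --- and that the equality is an almost-sure/pathwise identity in $g$, so no independence or moment property of the Gumbel noise is actually used at this stage.
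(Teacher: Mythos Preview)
Your proposal is correct and follows essentially the same argument as the paper: split the exponent, factor $e^{g/\beta}$ out of the integral, and then apply $\beta\log(\cdot)$ using $\log(xy)=\log x+\log y$. Your additional remarks---that the Gumbel law of $g$ is irrelevant here and that finiteness/positivity of $\int\exp(\alpha/\beta)\,d\alpha$ is needed for the logarithm---are accurate side observations the paper does not make explicit, but the core proof is identical.
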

\begin{proof}
\ \
    \begin{align*}
    \beta \log \int \exp \left( \frac{\alpha + g}{\beta} \right)d \alpha &= \beta \log \int \exp \left( \frac{\alpha}{\beta} + \frac{g}{\beta} \right)d \alpha  \\
    &= \beta \log (\exp \left( \frac{g}{\beta} \right) \int \exp \left( \frac{\alpha}{\beta} \right) d \alpha ) \\
    &= \beta \log \exp \left( \frac{g}{\beta} \right) + \beta \log \int \exp \left( \frac{\alpha}{\beta} \right)d \alpha  \\
    &= \beta \log \int \exp \left( \frac{\alpha}{\beta} \right) d \alpha + g.
    \end{align*}
    
\end{proof}

\begin{lemma}[\citealt{hui2023double}]Given a optimal $Q$-function $Q^\star$ and a state-dependent gumbel noise $g(s)\sim\mathcal{G}(0,\beta(s))$, the following equation holds:
\begin{equation}
\label{eq:soft Bellman for optimal}
    \max_a Q^{\star}(s, a)  =  \beta(s) \log \int \exp \left( \frac{Q(s, a)}{\beta(s)} \right) \mathrm{d}a + g(s).  
\end{equation}
\label{lem:max-q}
\end{lemma}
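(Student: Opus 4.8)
The plan is to obtain Lemma~\ref{lem:max-q} from the (continuous) Gumbel-max trick, using Assumption~\ref{ass: gumbel error 1} to present $Q^\star$ as a Gumbel-perturbed copy of $Q$. First I would rewrite, for each action $a$ at the fixed state $s$, $Q^\star(s,a) = Q(s,a) + \epsilon(s,a)$ with $\epsilon(s,a)\sim\mathcal{G}(0,\beta(s))$ by Assumption~\ref{ass: gumbel error 1}, and treat the perturbations $\{\epsilon(s,a)\}_{a}$ as mutually independent; this independence is the standard reading of the Gumbel error model and should be stated explicitly. The task then reduces to identifying the distribution of $\max_{a}\{Q(s,a)+\epsilon(s,a)\}$.

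Second, I would compute the CDF of that maximum directly. By independence and the Gumbel CDF $P(\epsilon(s,a)\le u)=\exp(-\exp(-u/\beta(s)))$, we have $P\big(\max_a Q^\star(s,a)\le t\big)=\prod_a\exp\!\big(-\exp(-(t-Q(s,a))/\beta(s))\big)$; passing to the continuum over the action space turns the product into $\exp\!\big(-e^{-t/\beta(s)}\!\int\exp(Q(s,a)/\beta(s))\,da\big)$. Writing the integral as $\exp\!\big(\log\!\int\exp(Q(s,a)/\beta(s))\,da\big)$ and folding it into the outer exponent shows this equals $\exp\!\big(-\exp(-(t-c(s))/\beta(s))\big)$ with $c(s)=\beta(s)\log\!\int\exp(Q(s,a)/\beta(s))\,da$, i.e.\ the CDF of $\mathcal{G}(c(s),\beta(s))$. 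Hence $\max_a Q^\star(s,a)$ has the same law as $c(s)+g(s)$ with $g(s)\sim\mathcal{G}(0,\beta(s))$, which is the assertion. Lemma~\ref{lemma 1} is precisely the algebraic identity that separates the deterministic location $c(s)$ from the residual Gumbel term inside $\beta(s)\log\!\int\exp(\cdot)\,da$, so it is invoked at the step where location and noise are split.

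The main obstacle is the continuous action space: the finite-product argument must be replaced by a limiting one --- discretize $\mathcal{A}$, apply the finite Gumbel-max identity, and let the mesh shrink --- and one must assume $\int\exp(Q(s,a)/\beta(s))\,da<\infty$ so that $c(s)$ is well-defined; both are routine but need mention. A secondary subtlety is that the identity is an equality \emph{in distribution} rather than a pointwise one (the left side is a fixed number only if $Q^\star$ is taken deterministic, whereas the right side carries the noise $g(s)$), so the statement should be read as ``$\max_a Q^\star(s,a)$ is distributed as the right-hand side''; this is exactly how the lemma is then used --- taking expectations over $g(s)$ recovers Proposition~\ref{prop 1}, and the $\omega\beta(s)$ offset underlies Proposition~\ref{prop 2}.
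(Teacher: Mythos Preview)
Your proposal is correct and, in fact, supplies more than the paper does: the paper's own ``proof'' of this lemma is simply the citation ``See prior work \citep{hui2023double}'' with no argument given. Your route---writing $Q^\star(s,a)=Q(s,a)+\epsilon(s,a)$ via Assumption~\ref{ass: gumbel error 1}, computing the CDF of the maximum by independence, and recognizing the result as a shifted Gumbel---is the standard Gumbel-max derivation and is almost certainly what the cited reference contains. You are also right to flag that the identity is one \emph{in distribution} (the paper's statement writes it as a pointwise equation, which is sloppy) and that the continuous action space requires either a discretization limit or a Gumbel-process construction, since a literal supremum of uncountably many independent Gumbel variables with common scale is a.s.\ infinite; the integrability condition $\int\exp(Q(s,a)/\beta(s))\,da<\infty$ is the right hypothesis to record. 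These caveats do not appear in the paper at all, so your treatment is strictly more careful.
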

\begin{proof}
    See prior work \citep{hui2023double}.
\end{proof}
With the lemmas above, we're ready to provide the proof for Proposition 1:

\begin{proof}
    By definition, $V^\star(s)=\max_a Q^\star(s,a)$, and according to Lemma~\ref{lem:max-q}, we have:
    \begin{align*}
        V^{\star}(s)=\beta(s) \log \int \exp \left( \frac{Q(s, a)}{\beta(s)} \right) \mathrm{d}a + g(s).
    \end{align*}
    And by Lemma~\ref{lemma 1}, it exists:
    \begin{align*}
        V^{\star}(s)&= \beta(s) \log \int \exp \left( \frac{Q(s, a)+g(s,a)}{\beta(s)} \right)da  \\
             &= \beta(s) \log \int \exp \left( \frac{Q^{\star}(s, a)}{\beta(s)} \right)da.
    \end{align*}
    The last part is completed by directly applying Assumption 1.
\end{proof}

\subsection{Proof for Proposition 2}
\label{sec:proof-prop-2}
Given $\hat V(s)=\mathbb{E}[V^\star(s)]$, Proposition 1 and Assumption 1, we can construct the proof for Proposition 2:

\begin{proof}
    \begin{equation*}
    \begin{aligned}
         \hat{V}(s)-V(s)
        & =\mathbb{E}\Bigg[\beta(s) \log \int \exp \left( \frac{Q^\star(s, a)}{\beta(s)} \right)da -\beta(s)\log \int \exp \left( \frac{Q^\star(s, a)- g(s)}{\beta(s)} \right)da\Bigg] \\
         & =\mathbb{E}\beta(s)\Bigg[ \log \int \exp \left( \frac{Q^\star(s, a)}{\beta(s)} \right)da -\log \int \exp \left( \frac{Q^\star(s, a)}{\beta(s)} \right)da+\frac{g(s)}{\beta(s)}\Bigg] \\
        & = \mathbb{E}[g(s)]=\omega\beta(s).
    \end{aligned}
\end{equation*}
The final equation leverages a property of the Gumbel distribution to compute the expectation:
\begin{equation*}
    \mathbb{E} [g]  = \omega \beta, \quad  \text{where} \quad g \sim \mathcal{G} (0, \beta).
\end{equation*}
\end{proof}

\subsection{Proof for Proposition 3}
\label{sec:proof-prop-3}
Leveraging Assumption 2 and the properties of the Gumbel distribution, we can construct the following proof:
\begin{proof}
    \begin{equation*}
    \begin{aligned}
     P(Q(s,a)\leq V(s)) &= 1 - P((Q(s,a) >V(s)) \\
     &= 1 - P(-g(s)>0)\\
     &=1 - P(g(s)<0)\\
     &=1 - \exp(-1)=\alpha_1,
    \end{aligned}
\end{equation*}
where we denote $g(s)\sim\mathcal{G}(0,\beta(s))$.

\end{proof}

\subsection{Proof for Proposition 4}
\label{sec:proof-prop-4}

Similar to the proof of Proposition 3, the proof of Proposition 4 can be constructed using Assumption 2, Proposition 2, and the properties of the Gumbel distribution:
\begin{proof}
    \begin{equation*}
        \begin{aligned}
         P(Q(s,a)\leq \hat{V}(s)) &= 1 - P((Q(s,a) >\hat{V}(s)) \\
         &= 1 - P(-g(s)-\omega\beta(s)>0)\\
         &=1 - P(g(s)<-\omega\beta(s))\\
         &=1 - \exp(-\exp(\omega)),
        \end{aligned}
    \end{equation*}
    where we denote $g(s)\sim\mathcal{G}(0,\beta(s))$.
\end{proof}

\section{Analysis of the Error Model}
For completeness, this section provides a formal exposition of the Rust–McFadden model and the Gumbel Error Model (GEM), both of which underpin the theoretical foundation of extreme value theory in reinforcement learning. For a more comprehensive treatment, we refer the reader to~\citep{garg2023extreme, mcfadden1972conditional}.

\subsection{Rust–McFadden Model for Markov Decision Processes}

We consider a Markov decision process (MDP) in which the observed stochasticity in rewards arises not from intrinsic environmental randomness, but from an unobserved latent variable. Formally, the complete state is modeled as a tuple $(s, z)$, where $s \in \mathcal{S}$ denotes the observable state and $z$ represents a latent component that governs reward uncertainty. The associated state–action value and value functions are defined as:
\begin{align*}
    Q(s, z, a) &= R(s, z, a) + \gamma \, \mathbb{E}_{s' \sim P(\cdot \mid s,a)}\!\left[ \mathbb{E}_{z' \mid s'}[V(s', z')] \right], \\
    V(s, z) &= \max_{a \in \mathcal{A}} Q(s, z, a).
\end{align*}
The following lemma establishes a critical equivalence between this latent-variable formulation and soft (entropy-regularized) MDPs under standard structural assumptions.

\begin{lemma}[\citealt{garg2023extreme}]
Suppose the following conditions hold:
\begin{enumerate}
    \item \textbf{Conditional Independence (CI):} The latent variable $z'$ depends only on the next observable state $s'$, i.e.,
    \[
        p(s', z' \mid s, z, a) = p(z' \mid s') \, p(s' \mid s, a).
    \]
    \item \textbf{Additive Separability (AS):} The reward function decomposes additively as
    \[
        R(s, a, z) = r(s, a) + \epsilon(z, a),
    \]
    where $r(s, a)$ is the deterministic component and $\epsilon(z, a)$ captures latent perturbations.
\end{enumerate}
If the perturbations $\epsilon(z, a)$ are independent and identically distributed according to the Gumbel distribution $\mathcal{G}(0, \beta)$, then the optimal value functions satisfy the soft-Bellman equations with entropy regularization parameter $\beta$. Specifically, the state–action value function admits the decomposition $Q(s, z, a) = q(s, a) + \epsilon(z, a)$, and the marginal value function is given by $v(s) = \mathbb{E}_z[V(s, z)]$.
\end{lemma}

Consequently, a maximum entropy MDP (MaxEnt MDP) is distributionally equivalent to a standard MDP augmented with i.i.d. Gumbel noise in the reward, provided that the CI and AS conditions are satisfied.

This equivalence is further strengthened by the following converse result.

\begin{corollary}[\citealt{mcfadden1972conditional}]
Consider an MDP that satisfies the Bellman optimality equation and admits a policy of the form $\pi(a \mid s) = \mathrm{softmax}(Q(s, a)/\beta)$. If the randomness in observed rewards arises from i.i.d. latent variables and the AS and CI conditions hold, then these latent variables must follow a Gumbel distribution.
\end{corollary}

\subsection{Gumbel Error Model (GEM)}
The Gumbel Error Model (GEM) \citep{garg2023extreme} provides a distributional perspective on value estimation by modeling Bellman errors as Gumbel-distributed perturbations. This framework elucidates how uncertainty propagates through value iteration and justifies entropy regularization from a statistical standpoint.

\subsubsection{Model Foundations}
Let $\{\hat{Q}^{(i)}_t(s, a)\}_{i=1}^M$ denote $M$ independent estimators of the state–action value function at iteration $t$, and define the expected Q-function as $\bar{Q}_t(s, a) = \mathbb{E}[\hat{Q}_t(s, a)]$. Within the GEM framework, $\bar{Q}_t$ evolves according to the recursion:
\begin{equation}
    \bar{Q}_{t+1}(s, a) = r(s, a) + \gamma \, \mathbb{E}_{s' \sim P(\cdot \mid s,a)}\!\left[ \mathbb{E}_{\epsilon_t}\!\left[ \max_{a'} \big( \bar{Q}_t(s', a') + \epsilon_t(s', a') \big) \right] \right],
\end{equation}
where $\epsilon_t(s', a') \overset{\text{i.i.d.}}{\sim} \mathcal{G}(0, \beta)$ are Gumbel noise terms with scale parameter $\beta > 0$.

A key property of the Gumbel distribution is that the expectation of the perturbed maximum admits a closed-form expression:
\[
\mathbb{E}_\epsilon\!\left[ \max_{a'} \big( \bar{Q}(s', a') + \epsilon(s', a') \big) \right] = L^\beta_{a'}[\bar{Q}(s', a')],
\]
where $L^\beta_a[Q] := \beta \log \sum_{a} \exp(Q(s, a)/\beta)$ denotes the log-sum-exp (LSE) operator. Substituting this identity yields the soft-Bellman update:
\[
\bar{Q}_{t+1}(s, a) = r(s, a) + \gamma \, \mathbb{E}_{s' \sim P(\cdot \mid s,a)}\!\left[ L^\beta_{a'}[\bar{Q}_t(s', a')] \right].
\]
Under this dynamics, the policy $\pi(a \mid s) = \mathrm{softmax}( \bar{Q}(s, a) / \beta )$ is soft-optimal and maximizes the entropy-regularized expected return.

\subsubsection{Error Dynamics under Deterministic Transitions}
In the special case of deterministic dynamics, i.e., $s' = f(s, a)$, the distributional evolution of Q-values can be characterized more precisely. Let $Z_t(s, a) \sim \mathcal{G}(Q_t(s, a), \beta)$ denote a random variable representing the Q-value at time $t$, with independence across state–action pairs. By the Gumbel-max theorem, the maximum over actions satisfies:
\[
\max_{a'} Z_t(s', a') \sim \mathcal{G}\big( L^\beta_{a'}[Q_t(s', a')], \beta \big).
\]
The Bellman update then takes the form:
\[
Z_{t+1}(s, a) = r(s, a) + \gamma \max_{a'} Z_t(s', a').
\]
This formulation recovers a standard Bellman operator acting on Gumbel-distributed Q-values, thereby inheriting standard convergence guarantees under mild regularity conditions.

\section{Doubly Mild Generalization}
In this section, for completeness, we provide a brief introduction to Doubly Mild Generalization \citep{mao2024doubly}, which is used in our proposed algorithm. Offline RL is fundamentally challenged by extrapolation errors and value overestimation, both of which stem from the tendency of value functions to over-generalize to out-of-distribution (OOD) actions. While in-sample learning approaches, such as Implicit Q-Learning (IQL) \citep{kostrikov2021offline}, circumvent this issue by entirely restricting policy updates to the support of the empirical behavior policy, they forgo the potential benefits of controlled generalization beyond the observed data. This dichotomy gives rise to two opposing failure modes:

\begin{enumerate}
    \item \textbf{Over-Generalization:} When value estimates are queried at OOD actions, erroneous predictions are propagated through Bellman backups, often leading to unstable or divergent learning dynamics.
    
    \item \textbf{Under-Generalization (Non-Generalization):} Strict in-sample methods constrain the learned policy to lie within the convex hull of observed actions, thereby limiting policy improvement and potentially yielding suboptimal performance.
\end{enumerate}

To reconcile these extremes, the Doubly Mild Generalization (DMG) framework was recently proposed by~\citep{mao2024doubly}. DMG introduces a principled compromise that permits mild generalization while preserving robustness, by jointly regulating both action selection and value propagation.

The framework comprises two core mechanisms:

\begin{itemize}
    \item \textbf{Mild Action Generalization:} The learned policy $\pi(\cdot \mid s)$ is allowed to explore a local neighborhood around the support of the empirical behavior policy $\mu(\cdot \mid s)$, formalized by the constraints:
    \begin{align}
        \mathrm{supp}(\mu(\cdot  \mid s)) & \subseteq \mathrm{supp}(\pi(\cdot \mid s)), \\
        \max_{a_1 \sim \pi(\cdot \mid s)} \min_{a_2 \sim \mu(\cdot \mid s)} & \|a_1 - a_2\| \leq \epsilon_a,
    \end{align}
    where $\epsilon_a > 0$ is a small tolerance parameter, and $\mu$ denotes the empirical behavior policy derived from the offline dataset.

    \item \textbf{Mild Generalization Propagation:} The Bellman update blends value estimates from both generalized and in-sample actions via a convex combination:
    \begin{equation}
        \mathcal{T}_{\mathrm{DMG}} Q(s, a) := R(s, a) + \gamma \, \mathbb{E}_{s' \sim P(\cdot \mid s,a)}\!\left[ \lambda \, \max_{a' \sim \pi(\cdot \mid s')} \left[ Q(s', a') \right] + (1 - \lambda) \, \max_{a' \sim \mu(\cdot \mid s')} Q(s', a') \right],
    \end{equation}
    where $\lambda \in [0, 1]$ controls the trade-off between policy-driven generalization and conservative in-sample estimation. Notably, the first term enables mild optimism through policy-guided exploration, while the second term ensures robustness by anchoring the update to the empirical data distribution.
\end{itemize}

Under this design, DMG enjoys provable performance guarantees under both oracle and adversarial generalization regimes, as established in~\citep{mao2024doubly}.

\section{Conservative Estimation Analysis}
\label{sub: ce analysis}
This section provides a more detailed discussion of the conservative estimation introduced in~\ref{subsec: DMG & VR}.

To initiate the analysis, consider the quantity \(Q(s',a')\) employed in Doubly Mild Generalization (DMG), where \(s' \sim \mathcal{D}\) and \(a' \sim \pi(\cdot \mid s')\). Since \((s',a')\) resides outside the support of the dataset, it is inherently excluded from the fitted Q-iteration (FQI) process. Accordingly, Assumption~\ref{ass: gumbel error 1} is not imposed on \(Q(s',a')\). Instead, this quantity is interpreted as the value \(Q_{\pi}(s',a')\) under policy \(\pi \). Therefore, it then follows that:
\begin{proposition}
    For a pair \((s',a')\) with \(s' \sim \mathcal{D}\) and \(a' \sim \pi(\cdot \mid s')\), there exists:
    \[
        Q(s',a') = Q_{\pi}(s',a') \le Q_{\pi^{\star}}(s',a') = Q^{\star}(s',a').
    \]
\end{proposition}
In particular, $Q(s', a')$ serves as a conservative estimate of the optimal action-value function $Q^{\star}(s', a')$, i.e., $Q(s', a') \leq Q^{\star}(s', a')$ for all $(s', a')$. Moreover, when the policy $\pi$ is an improved, near-optimal policy, the degree of conservatism is expected to be relatively mild. 

Recall that the optimal value function satisfies the identity
\[
    V^{\star}(s') = \beta(s') \log \int \exp\!\left(\frac{Q^{\star}(s', a')}{\beta(s')}\right) da'.
\]
Since the exponential and logarithm are monotonic functions, replacing $Q^{\star}$ with its conservative approximation $Q$ yields a lower bound. Specifically, \(\bar{V}(s') = \beta(s') \log \int \exp\!\left(\frac{Q(s', a')}{\beta(s')}\right) da'\) provides a conservative estimate of $V^{\star}(s')$, and by extension, of the empirical value estimate $\hat{V}(s')$. Under Assumption~\ref{ass: gumbel error 2}, $\bar{V}(s')$ can be approximated by the $\alpha_1$-quantile of $Q(s', a')$, yielding a mild yet principled conservative estimate of $\hat{V}(s')$.

In light of this reasoning, during the DMG, the target for $\hat{V}(s')$ is set to the $\alpha_1$-quantile of $Q(s', a')$. Accordingly, the $\alpha_0$-quantile of $Q(s', a')$ is employed as the target for $\hat{V}(s')$.

\section{Baseline Hyperparameter Settings}
\label{sec: baseline-para}
\subsection{Extreme $Q$-Learning}
\label{subsec: XQL para}

\begin{table}[htbp]
    \centering
    \begin{tabular}{l|ccc}
    \toprule
    \textbf{Task (Variant)} & \textbf{halfcheetah} & \textbf{hopper} & \textbf{walker2d} \\
    \midrule
    \textbf{medium}        & 1.0  & 5.0  & 10.0 \\
    \textbf{medium-rep}    & 1.0  & 2.0  & 5.0  \\
    \textbf{medium-exp}    & 1.0  & 2.0  & 2.0  \\
    \bottomrule
    \end{tabular}
    \vspace{+10pt}
    \caption{XQL temperature settings ($\beta$) with dataset-specific tuning on D4RL locomotion datasets.}   
    \label{tab:XQL_mujoco_tasks_wrapped}
\end{table}

\begin{table}[htbp]
    \centering
    \begin{tabular}{l|ccc}
    \toprule
    \textbf{Task (Variant)} & \textbf{pen} & \textbf{hammer} & \textbf{door} \\
    \midrule
    \textbf{human}   & 5.0 & 0.5 & 1.0 \\
    \textbf{cloned}  & 0.8 & 5.0 & 5.0 \\
    \bottomrule
    \end{tabular}
    \vspace{+10pt}
    \caption{XQL temperature settings ($\beta$) with dataset-specific tuning on D4RL Adroit datasets.}   
    \label{tab:XQL_adroit_tasks_wrapped}
\end{table}

\begin{table}[htbp]
    \centering
    \begin{tabular}{l|cc}
    \toprule
    \textbf{Hyperparameter} & \textbf{antmaze-umaze} & \textbf{antmaze-umaze-diverse} \\
    \midrule
    $\beta$ & 1.0 & 5.0 \\

    \bottomrule
    \end{tabular}
    \vspace{+10pt}
    \caption{XQL temperature settings ($\beta$) with dataset-specific tuning on D4RL AntMaze datasets.}
    \label{tab:XQL_antmaze_tasks_wrapped}
\end{table}

\begin{table}[htbp]
    \centering
    \begin{tabular}{l|cc}
    \toprule
    \textbf{Hyperparameter}  & \textbf{RocketRecovery} & \textbf{SafetyHalfCheetah}\\
    \midrule
    $\beta$ & 2.0 & 2.0 \\

    \bottomrule
    \end{tabular}
    \vspace{+10pt}
    \caption{XQL temperature settings ($\beta$) with dataset-specific tunning on NeoRL2 datasets.}
    \label{tab:XQL_neorl2}
\end{table}

This section illustrates the temperature hyperparameter $\beta$ settings used in the Extreme $Q$-Learning (XQL) baseline \citep{garg2023extreme} across various offline reinforcement learning (RL) tasks from the D4RL and NeoRL2 benchmark. Dataset-specific $\beta$ values are listed in Tables~\ref{tab:XQL_mujoco_tasks_wrapped}, \ref{tab:XQL_adroit_tasks_wrapped}, \ref{tab:XQL_antmaze_tasks_wrapped} and \ref{tab:XQL_neorl2}, corresponding to the results in Table 1 and the first row of results in Table 3 in the experiment section. Domain-consistent $\beta$ settings are shown in Table~\ref{tab:XQL_domain_wrapped} and relate to the second row of Table 3, while a fully consistent setting across all domains uses $\beta = 2.0$, as reported in the third row. The variability in optimal $\beta$ values across tasks highlights XQL's sensitivity to its temperature parameter and the importance of tuning it carefully for each environment, as also emphasized in \citep{garg2023extreme}. For all other hyperparameter settings, we follow the XQL reproduction provided by \citep{offlinerllib}.

\begin{table}[htbp]
    \centering
    \begin{tabular}{l|ccc}
    \toprule
    \textbf{Hyperparameter} & \textbf{D4RL Locomotion} & \textbf{D4RL Adroit} & \textbf{D4RL AntMaze}\\
    \midrule
    $\beta$ & 2.0 & 5.0 & 0.6 \\

    \bottomrule
    \end{tabular}
    \vspace{+10pt}
    \caption{XQL consistent temperature settings ($\beta$) per domain.}
    \label{tab:XQL_domain_wrapped}
\end{table}

\subsection{MXQL and Other Baselines}
For MXQL \citep{omura2024mxql}, we adopt the original JAX implementation and use the exact hyperparameter settings as reported in the paper. For the remaining baseline algorithms, we use the reproduced results provided by CORL \citep{tarasov2022corl}.

\section{Implementation Details}
\label{sec:implementation}

This section provides a comprehensive overview of the hyperparameters and implementation specifics of our proposed QQL algorithm.

Our implementation of QQL is built upon the publicly available PyTorch implementation of Implicit Q-Learning (IQL) from CORL \citep{tarasov2022corl}. We adopt the identical neural network architecture and retain the exact hyperparameter settings used for network updates in the original IQL implementation.

A crucial aspect of our approach is ensuring that $\beta(s)$ remains positive during policy optimization. To achieve this, we define $\beta(s)$ as the absolute value of $(\hat{V}_{\psi_2}(s) - V_{\psi_1}(s))/\omega$. For all other procedural steps, the direct value of $(\hat{V}_{\psi_2}(s) - V_{\psi_1}(s))/\omega$ is utilized. To prevent $\beta(s)$ from becoming infinitesimally small, we apply a lower bound clipping, setting $\beta(s) \ge \beta_{low} = 0.1$ specifically within the policy optimization routine. For consistency across all evaluated task-dataset combinations, we fix the generalization coefficient $\lambda$ to $1.0$ and the policy constraint weight $\zeta$ to $1.0$.

Table~\ref{tab: hyperparameters} summarizes the detailed hyperparameters employed in our QQL algorithm.

\begin{table}[htbp]
\centering
\caption{Detailed Hyperparameters of the QQL Algorithm.}
\label{tab: hyperparameters}
\begin{tabular}{ll}
\toprule
\textbf{Parameter} & \textbf{Value} \\
\midrule
V-function learning rate ($\alpha_V$) & $3 \times 10^{-4}$ \\
Q-function learning rate ($\alpha_Q$) & $3 \times 10^{-4}$ \\
Policy learning rate ($\alpha_\pi$) & $3 \times 10^{-4}$ \\
Discount factor ($\gamma$) & 0.99 \\
Target network update rate ($\tau$) & 0.005 \\
Batch size & 256 \\
$\beta_{low}$ & 0.1 \\
$\lambda$ & 1.0 \\
$\zeta$ & 1.0 \\

Hidden layer dimension & 256 \\
Number of hidden layers & 2 \\
Activation function & ReLU \\
\bottomrule
\end{tabular}
\end{table}

\begin{figure}[htbp]
    \centering
    \includegraphics[width=0.95\linewidth]{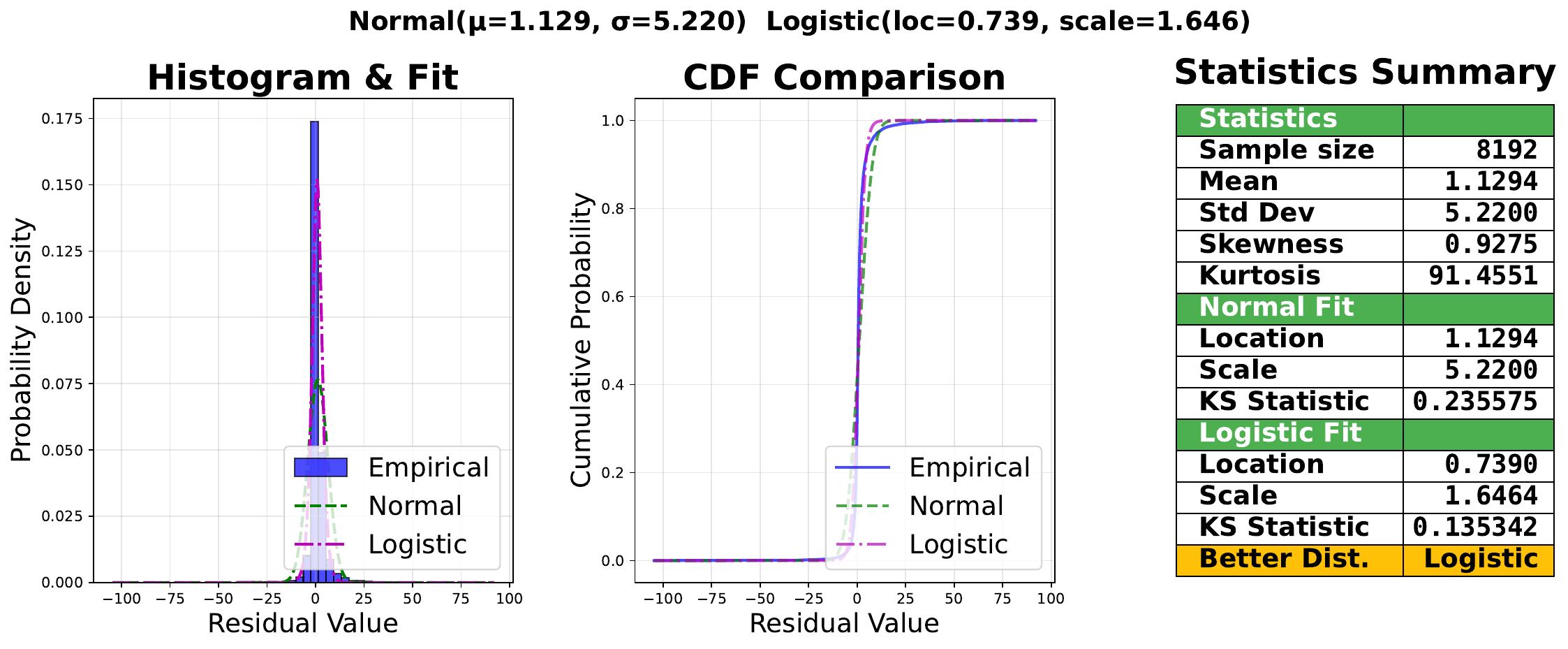}
    \caption{\textbf{Residual Analysis: HalfCheetah-Medium.}}
    \label{fig:res_half_m}
\end{figure}

\begin{figure}[htbp]
    \centering
    \includegraphics[width=0.95\linewidth]{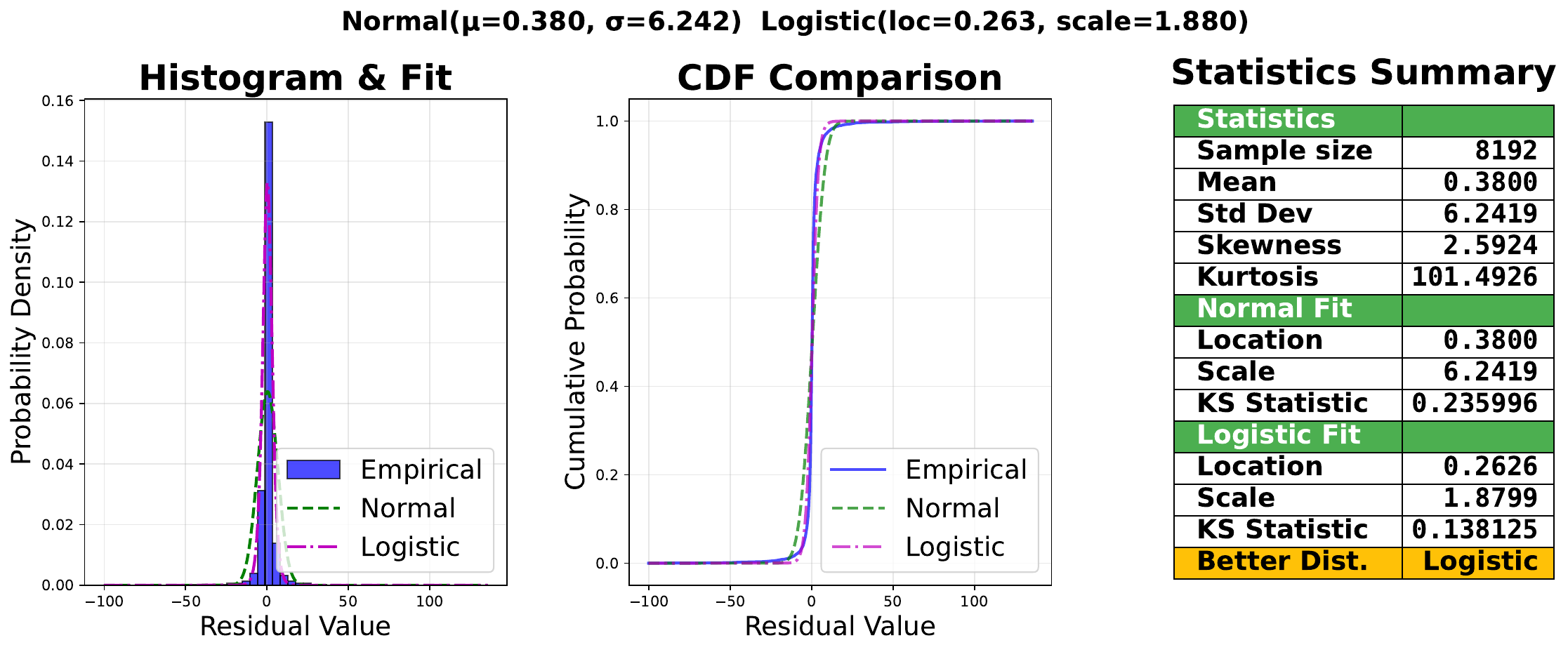}
    \caption{\textbf{Residual Analysis: HalfCheetah-Medium-Replay.} }
    \label{fig:res_half_mr}
\end{figure}

\section{Residual Analysis on D4RL}
\label{sec: residual analysis}
To empirically evaluate the plausibility of the modeling assumptions, a residual analysis is performed using value functions trained on the D4RL benchmark. For transition tuples \((s, a, r, s')\) sampled from the offline dataset, the regression residual is defined as  
\[
\epsilon(s, a, r, s') = Q_\theta(s, a) + \bigl(\hat{V}_{\psi_2}(s) - V_{\psi_1}(s)\bigr) - r(s, a) - \gamma \hat{V}_{\psi_2}(s'),
\]
which measures the discrepancy between the predicted and target values under the assumed Bellman relationship. To mitigate the effect of state-dependent variance, residuals are normalized by a learned scale parameter \(\beta(s)\), resulting in the normalized residual  
\begin{equation}
\label{eq:bellman error}
    \epsilon_{\text{norm}}(s, a, r, s') = \frac{\epsilon(s, a, r, s')}{\beta(s)}.
\end{equation}
This normalized residual is subsequently analyzed across the D4RL locomotion tasks to assess the distributional behavior and potential systematic deviations of the estimated value functions.

\begin{figure}[htbp]
    \centering
    \includegraphics[width=0.95\linewidth]{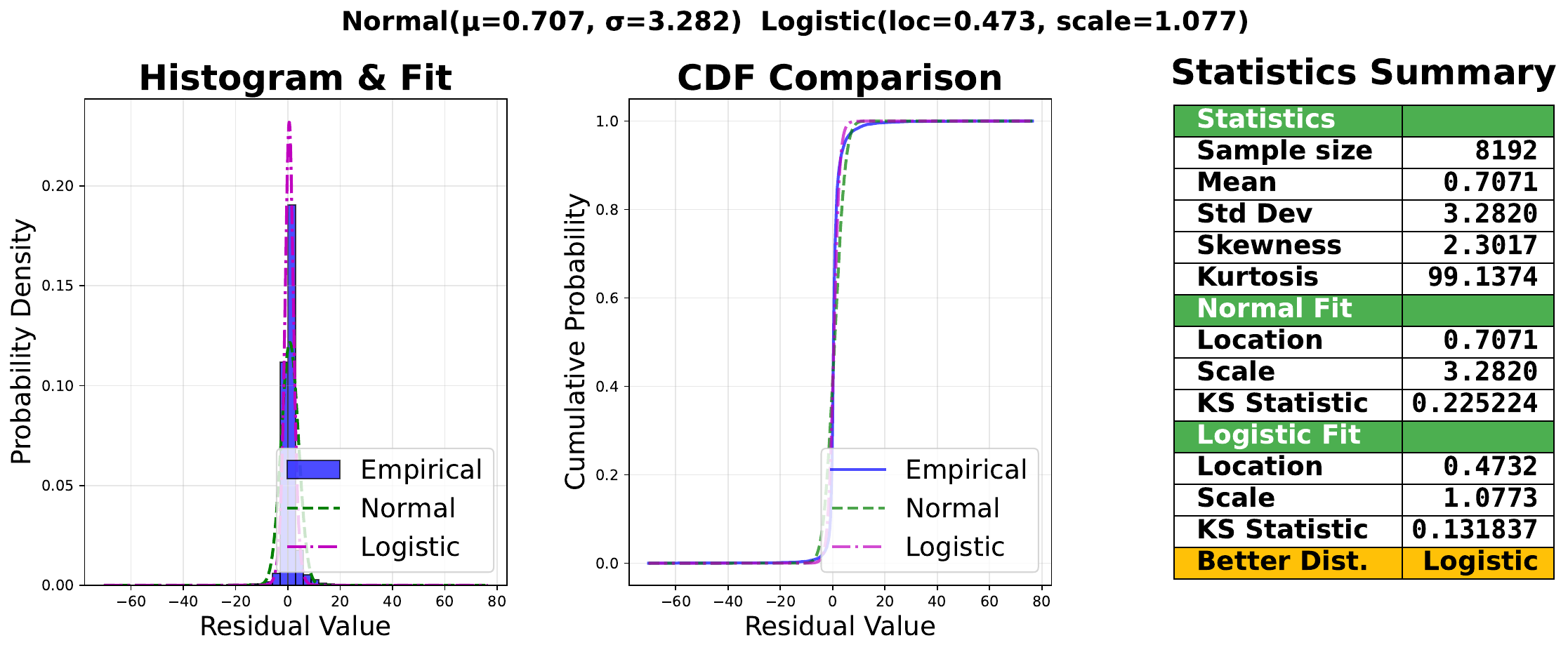}
    \caption{\textbf{Residual Analysis: Walker2d-Medium.}} 
    \label{fig:res_walker_m}
\end{figure}

\begin{figure}[htbp]
    \centering
    \includegraphics[width=0.95\linewidth]{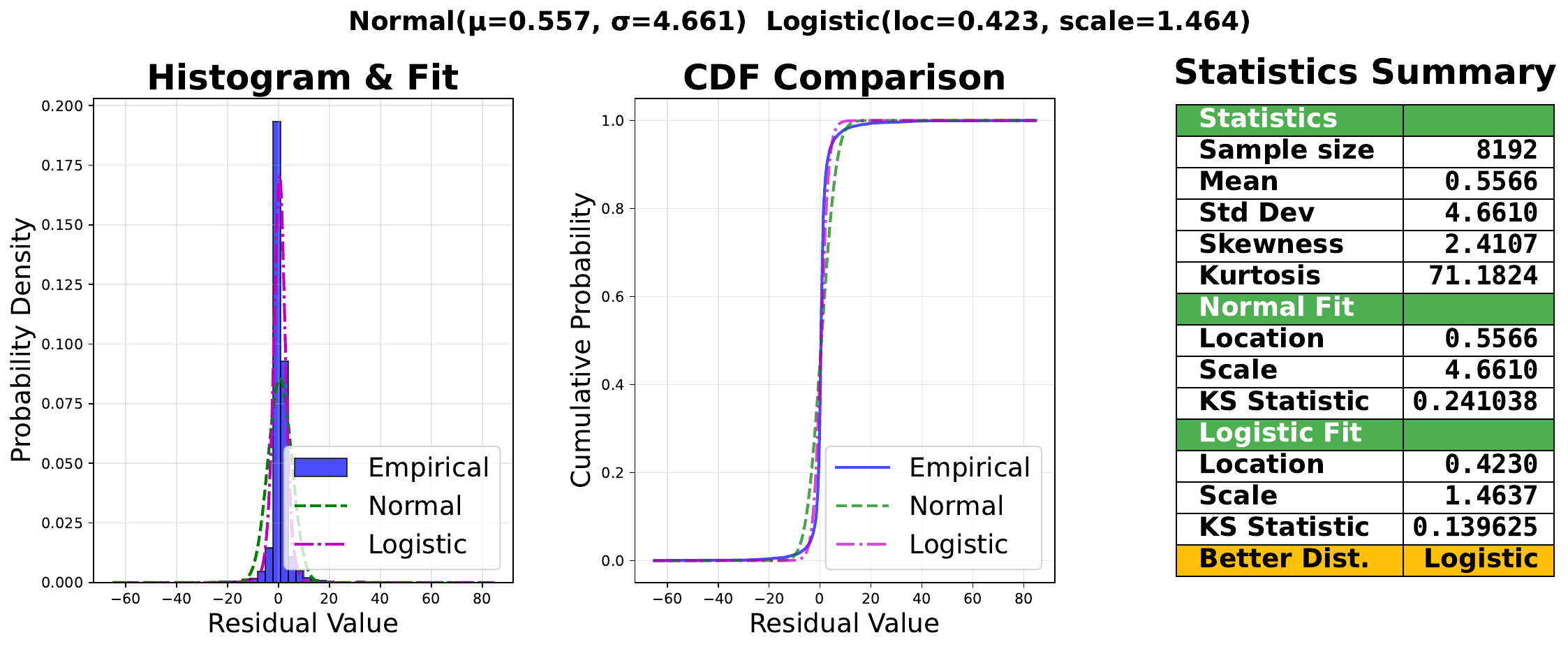}
    \caption{\textbf{Residual Analysis: Walker2d-Medium-Replay.}}
    \label{fig:res_walker_mr}
\end{figure}

\paragraph{Results and Discussion.}
For the normalized residuals $\epsilon_{\text{norm}}$ on four D4RL tasks (Figures~\ref{fig:res_half_m}–\ref{fig:res_walker_mr}), we report maximum-likelihood fits under both normal and logistic distributional assumptions. The empirical histograms and cumulative distribution functions exhibit heavier tails and slight asymmetry compared to the normal fit, while the logistic fit aligns more closely with the observed data in these regions. Consistently across tasks, the logistic model yields lower Kolmogorov–Smirnov (KS) statistics than the normal model (e.g., 0.138 vs. 0.236 in HalfCheetah-Medium-Replay). These empirical findings suggest that, at least in comparison to the homoscedastic Gaussian assumption, the logistic noise model, as implied by Assumption~\ref{ass: gumbel error 1} and derived in~\citep{hui2023double}, provides a more realistic characterization of Bellman residuals in offline Q-learning.

\end{document}